\def\eqref#1{equation~\ref{#1}}
\def\1{\bm{1}}
\DeclareMathAlphabet{\mathsfit}{\encodingdefault}{\sfdefault}{m}{sl}
\SetMathAlphabet{\mathsfit}{bold}{\encodingdefault}{\sfdefault}{bx}{n}
\newcommand{\softmax}{\mathrm{softmax}}
\newtheorem{definition}{Definition}
\newtheorem{lemma}{Lemma}
\newtheorem{theorem}{Theorem}
\definecolor{royal}{HTML}{1560BD}
\definecolor{light}{HTML}{1ea2a1}
\begin{document}

\twocolumn[

\aistatstitle{Distill n' Explain: explaining graph neural networks using simple surrogates}

\aistatsauthor{ Tamara Pereira$^1$, Erik Nascimento$^1$, Lucas E. Resck$^2$, Diego Mesquita$^2$, Amauri Souza$^{1,3}$}

\aistatsaddress{ $^1$Federal Institute of Cear\'a $^2$Getulio Vargas Foundation $^3$Aalto university } ]

% \aistatsauthor{  Tamara Pereira \And Erik Nascimento \And  Lucas E. Resck \And Diego Mesquita \And Amauri Souza}

% \aistatsaddress{ Federal Institute of Cear\'a \And Federal Institute of Cear\'a  \And  Getulio Vargas Foundation \And Aalto university \And Federal Institute of Cear\'a } ]

\begin{abstract}
Explaining node predictions in graph neural networks (GNNs) often boils down to finding graph substructures that preserve predictions. Finding these structures usually implies back-propagating through the GNN, bonding the complexity (e.g., number of layers) of the GNN to the cost of explaining it. This naturally begs the question: \emph{Can we break this bond by explaining a simpler surrogate GNN?} To answer the question, we propose \emph{Distill n' Explain} (DnX). First, DnX learns a surrogate GNN via \emph{knowledge distillation}. Then, DnX extracts node or edge-level explanations by solving a simple convex program. We also propose FastDnX, a faster version of DnX that leverages the linear decomposition of our surrogate model. Experiments show that DnX and FastDnX often outperform state-of-the-art GNN explainers while being orders of magnitude faster. Additionally, we support our empirical findings with theoretical results linking the quality of the surrogate model (i.e., distillation error) to the faithfulness of explanations. 
\end{abstract}

\section{Introduction}

Graph neural networks (GNNs) \citep{Gori2005,scarselli2009} have become the pillars of representation learning on graphs. Typical GNNs resort to message passing on input graphs to extract meaningful node/graph representations for the task at hand. Despite the success of GNNs in many domains \citep{antibiotic_design,Gilmer2017,recommendersystems,ComplexPhysics}, their architectural design often results in models with limited interpretability. This naturally makes it hard to diagnose scenarios in which GNNs are fooled by confounding effects or align poorly with expert knowledge.

To mitigate this lack of interpretability, a popular strategy is to use post-hoc explanation methods \citep{Ribeiro2016,shap2017,Hima2022, Hima2022b, GraphLIME}. The idea is to increase model transparency by highlighting input/model elements that are particularly important for predictions, helping users to understand what is happening under the hood.

There has been a recent outbreak of methods for explaining GNNs \citep{Yuan2022}. 
Although GNN explanations can come in different flavors \citep{GNNexplainer,subgraphx_icml21,Wang2021,Lucic2022,xgnn_kdd20}, they usually take the form of (minimal) substructures of input graphs that are highly influential to the prediction we want to explain. The seminal work of \citet[GNNExplainer]{GNNexplainer} proposes learning a \emph{soft} mask to weigh graph edges. To find meaningful masks, GNNExplainer maximizes the mutual information between the GNN predictions given the original graph and the masked one. To alleviate the burden of optimizing again whenever we want to explain a different node, \citet[PGExplainer]{PGExplainer} propose using node embeddings to parameterize the masks, i.e., amortizing the inference. 
Nonetheless, GNNExplainer and PGExplainer impose strong assumptions on our access to the GNN we are trying to explain. The former assumes we are able to back-propagate through the GNN. The latter further assumes that we can access hidden activations of the GNN. \citet[PGMExplainer]{PGMExplainer} relieve these assumptions by approximating the local behavior of the GNN with a probabilistic graphical model (PGM) over components, which can be used to rank the relevance of nodes and edges. On the other hand, getting explanations from PGMExplainer involves learning the structure of a PGM, and may not scale well.

In this work, we adopt the same black-box setting of \citet{PGMExplainer} but severely cut down on computational cost by extracting explanations from a \emph{global} surrogate model. In particular, we propose \emph{Distill n' Explain} (DnX). DnX uses knowledge distillation to learn a simple GNN $\Psi$, e.g. simple graph convolution~\citep[SGC]{SGC}, that mimics the behavior of the GNN $\Phi$ we want to explain. 
Then, it solves a simple convex program to find a mask that weighs the influence of each node in the output of $\Psi$.
We also propose FastDnX, a variant of DnX that leverages the linear nature of our surrogate to speed up the explanation procedure.
Notably, we only require evaluations of $\Phi$ to learn the surrogate $\Psi$ and, after $\Psi$ is fixed, we can use it to explain any node-level prediction. To back up the intuition that explaining a surrogate instead of the original GNN is a sensible idea, we provide a theoretical result linking the distillation quality to the faithfulness of our explanations.

Experiments on eight popular node classification benchmarks show that DnX and FastDnX often outperform GNN-, PG-, and PGM-Explainers. We also demonstrate that both DnX and FastDnX are much faster than the competitors. Remarkably, FastDnX presents a speedup of up to $65K\times$ over GNNExplainer. 
Finally, we discuss the limitations of current benchmarks and show that explainers capable of leveraging simple inductive biases can ace them.

\noindent\textbf{Our contributions} are three-fold:
\begin{enumerate}[itemsep=0pt]
    \item we propose a new framework for GNN explanations that treats  GNNs as black-box functions and hinges on explaining a simple surrogate model obtained through knowledge distillation;   
    \item we provide theoretical bounds on the quality of explanations based on these surrogates, linking the error in the distillation procedure to the faithfulness of the explanation;
    \item we carry out extensive experiments, showing that our methods outperform the prior art while running orders of magnitude faster.
\end{enumerate}

\section{Background}
\paragraph{Notation. } We define a graph $\mathcal{G} = (V, E)$, with a set of nodes  $V = \{1, \ldots, n\}$ and a set of edges $E \subseteq V \times V$. We denote the adjacency matrix of $\mathcal{G}$ by $A \in \mathbb{R}^{n\times n}$, i.e., $A_{ij}$ is one if $(i,j) \in E$ and zero otherwise. Let $D$ be the diagonal degree matrix of $\mathcal{G}$, i.e., $D_{i i} = \sum_{j} A_{i j}$. We also define the \emph{normalized} adjacency matrix with added self-loops as $\widetilde{A} = (D + I_n)^{-1/2} (A + I_n) (D + I_n)^{-1/2}$, where $I_n$ is the $n$-dimensional identity matrix. Furthermore, let  $X \in \mathbb{R}^{n \times d}$ be a matrix of $d$-dimensional node features. Throughout this work, we often represent a graph $\mathcal{G}$ using the pair $(A, X)$.

\paragraph{Graph neural networks (GNNs).} We consider the general framework of message-passing GNNs \citep{Gilmer2017}. Typical GNNs interleave aggregation and update steps at each layer. Specifically, for each node $v$ at layer $\ell$, the aggregation is a nonlinear function of the ($\ell-1$)-layer representations of $v$'s neighbors. The update step computes a new representation for $v$ based on its representation at layer $\ell-1$ and the aggregated messages (output of the aggregation step). 
Here we cover two specific GNN architectures: graph convolutional networks \citep[GCNs]{GCN} and simplified graph convolutions \citep[SGC]{SGC}. The former is arguably the most popular GNN in the literature and is used profusely throughout our experiments. The latter is a linear graph model, which will be an asset to our explanation method. 
For a more thorough overview of GNNs, we refer the reader to \citet{book-graph-learning}.

Graph convolutional networks combine local filtering operations (i.e., graph convolutions) and non-linear activation functions (most commonly ReLU) at each layer.
Denoting the weights of the $\ell$-th GCN layer by $W^{(\ell)}$ and the element-wise activation function by $\sigma$, we can recursively write the output of the $\ell$-th layer $H^{(\ell)}$ as:
\begin{equation}
    H^{(\ell)} = \sigma\left(\widetilde{A} H^{(\ell-1)} {W^{(\ell)}}\right),\label{eq:GCN}
\end{equation}
where $H^{(0)} =X$. To obtain node-level predictions, we propagate the final embeddings --- after an arbitrary number of layers --- through a modified convolution with a row-wise softmax instead of $\sigma$, i.e., $\hat{Y} = \softmax(\widetilde{A} H^{(\ell)} W^{(\ell+1)})$. In practice, it is also common to apply multilayer perceptron on top of the final embeddings.

SGC can be viewed as a simplification of the GCN model. \citet{SGC} derive SGC by removing the nonlinear activation functions in GCNs. Consequently, the chained linear transformations become redundant and we can use a single parameter matrix $\Theta$. 
Thus, node predictions from an $L$-layer SGC are:
\begin{align}
\hat{Y} &= \softmax(\widetilde{A}^{L} X \Theta).
\end{align}

Interestingly, \citet{SGC} showed that  SGC often performs similarly to or better than GCN in a variety of node classification tasks. On top of that, training SGCs is computationally more efficient than training GCNs, and SGC has significantly fewer parameters.

\section{DnX: Distill n' Explain}

We now introduce DnX --- a new post-hoc explanation method for GNNs. DnX comprises two steps: knowledge distillation and explanation extraction. During the former, we use a linear GNN $\Psi$ to approximate the predictions from the GNN $\Phi$ we want to explain. 
In the second step, we extract explanations directly from $\Psi$ (instead of $\Phi$). We hypothesize that, as long as $\Psi$ is a good approximation of $\Phi$, substructures highly influential to the output of $\Phi$ should also be relevant to $\Psi$. Therefore, explanations of our surrogate should also explain well the original GNN. To obtain explanations, we exploit the linear nature of $\Psi$ and propose two simple procedures. The first consists of solving a convex program. The second ranks nodes based on a simple decomposition of predictions into additive terms.  

Following \citet{PGMExplainer}, we assume $\Phi$ is a black-box model that we can only probe to get outputs. More specifically, we cannot access gradients of $\Phi$, nor can we access inner layers to extract node embeddings.

\subsection{Knowledge distillation}
\label{sec:kd}

We use SGC~\citep{SGC} to approximate the predictions obtained with the GNN $\Phi$.
Formally, the surrogate model (SGC) $\Psi$ receives the input graph $\mathcal{G}=(A, X)$ and provides class predictions
$\hat{Y}^{(\Psi_\Theta)} = \softmax(\widetilde{A}^L X \Theta)$, where $\Theta$ is the matrix of model parameters, and $L$ is a hyper-parameter. 

The distillation process consists of adjusting the parameters of $\Psi_\Theta$ to match its predictions to those of the network $\Phi$. We do so by minimizing the Kullback-Leibler divergence $\mathrm{KL}$ between the predictions of $\Phi$ and $\Psi_{\Theta}$. 
Let $\hat{Y}_i^{(\Psi_{\Theta})}$ and $\hat{Y}_i^{(\Phi)}$ denote the class predictions for node $i$ from the $\Psi_{\Theta}$ and $\Phi$ models, respectively. We distill $\Phi$ into $\Psi$  by solving: 
\begin{equation}
\min_{\Theta} \left\{  \mathrm{KL} \left( \hat{Y}^{(\Phi)}, \hat{Y}^{(\Psi_\Theta)} \right) \coloneqq
\sum_{i\in V} \sum_{c} \hat{Y}_{i c}^{(\Phi)}  \log \frac{\hat{Y}_{i c}^{(\Phi)}}{\hat{Y}_{i c}^{(\Psi_\Theta)}} \right\},
\label{eq:destilador}
\end{equation}
which is equivalent to  the categorical cross-entropy between $\hat{Y}^{(\Phi)}$ and $\hat{Y}^{(\Psi_\Theta)}$. Note that minimizing this loss does not require back-propagating through the original GNN $\Phi$, only through the surrogate $\Psi$. We also do not require any knowledge about $\Phi$'s architecture.

\subsection{Explanation extraction}
\label{sec:obtaining}

To obtain an explanation to a given prediction $\hat{Y}^{(\Psi_\theta)}_i$, we want to identify a subgraph of $\mathcal{G}$ containing the nodes that influence the most that prediction.
We denote an explanation $\mathcal{E}$ as an $n$-dimensional vector of importance scores (higher equals more relevant), one for each node in the vertex set $V$. 
We introduce two strategies to compute $\mathcal{E}$.

\paragraph{Optimizing for $\mathcal{E}$.} We can formulate the problem of finding the explanation $\mathcal{E}$ by treating it as a vector of 0-1 weights, and minimizing the squared $L_2$ norm between the logits associated with $\hat{Y}_i^{(\Psi_\Theta)}$ and those from the graph with node features masked by $\mathcal{E}$:
\begin{equation}
\min_{\mathcal{E}\in\{0,1\}^n}\parallel \widetilde{A}^{L}_{i}\mathrm{diag}(\mathcal{E})X\Theta - \widetilde{A}^{L}_{i}X\Theta \parallel_{2}^{2},
%- \bm{\zeta}(\bm{X},\bm{A}) \parallel_{2}
\label{eq:e1}
\end{equation}
where $\widetilde{A}^{L}_{i}$ denotes the $i$-th row of the matrix $\widetilde{A}^{L}$. Note that the formulation in \autoref{eq:e1} has a major issue: it admits the trivial solution $\mathcal{E} = [1, 1, \dots, 1]$. To circumvent the issue and simultaneously avoid binary optimization, we replace the search space $\{0, 1\}^n$ by the $(n-1)$-simplex $\Delta = \{r \in \mathbb{R}^n : \sum_i r_i = 1, \forall_i r_i \geq 0 \}$. Implementing this change and re-arranging computations, we wind up with:
\begin{equation}
\min_{\mathcal{E} \in \Delta}\Big\| \widetilde{A}^{L}_{i}\left(\mathrm{diag}(\mathcal{E}) - I_n\right)X\Theta \Big\|_{2}^{2}.
%- \bm{\zeta}(\bm{X},\bm{A}) \parallel_{2}
\label{eq:e2}
\end{equation}

Note that nodes outside the $L$-hop neighborhood of node $i$ do not affect how $\Psi$ classifies it. Thus, we can mask all nodes at distance $\geq L+1$ without altering the solution of \autoref{eq:e2}. For ease of implementation, we solve \autoref{eq:e2} reparameterizing $\mathcal{E}$ as a softmax-transformed vector.
%
% After optimizing for $\bm{\mathcal{E}}$, the values in $\bm{\mathcal{E}}$ serve as a ranking for the importance of each node.

\paragraph{Finding $\mathcal{E}$ via linear decomposition.} Let $Z_i$ denote the logit vector associated with the prediction $\hat{Y}_i^{(\Psi_\Theta)}$. Due to the linear nature of $\Psi$, we can decompose $Z_i$ into a sum of $n$ terms, one for each node in $V$ (plus the bias): 
\begin{equation}
    \widetilde{{A}}^{L}_{i 1} X_1 \Theta+     \widetilde{{A}}^{L}_{i 2} X_2 \Theta + \ldots +     \widetilde{A}^{L}_{i n} X_n\Theta + b=  Z_i. 
\end{equation}
 
Therefore, we can measure the contribution of each node to the prediction as  its scalar projection onto $Z_i - b$:
\begin{equation}
    \mathcal{E}_j \coloneqq \widetilde{{A}}^{L}_{i j} X_j \Theta (Z_i - b)^\intercal
\end{equation}
When we use this strategy instead of solving \autoref{eq:e2}, we refer to our method as FastDnX.

% \newpage
\section{Analysis}
    In this section, we discuss the theoretical and computational aspects of our method.
    We first provide theoretical results supporting the hypothesis that good explanations of a global surrogate $\Psi$ also characterize good explanations of $\Phi$ --- in terms of faithfulness.
    Then, we discuss the convexity of the optimization problem DnX solves to extract explanations.
    We delegate proofs to the \autoref{append:proofs}.

        \label{sec:faith}

        Let $\mathcal{G}_u$ denote the subgraph of $\mathcal{G}$ induced by the $L$-hop neighborhood around node $u$.
        We say an explanation $\mathcal{E}_u$ for a node $u$ is faithful with respect to $\Phi$ if: i) $\Phi$ outputs approximately the same predictions for $u$ regardless of using $\mathcal{E}_u$ to weigh the nodes of $\mathcal{G}_u$ or not; and ii) the same holds under small perturbations of $\mathcal{G}_u$. 
        We can define a perturbation $\mathcal{G}_{u}^\prime$ of $\mathcal{G}_u$  by adding noise to $u$'s features or by randomly rewiring node $u$'s incident edges \citep{agarwal_probing_2022}.
        In this work, we consider perturbations over node features. More precisely, this entails that $V(\mathcal{G}_{u}^\prime) = V(\mathcal{G}_u)$, $E(\mathcal{G}^\prime_u) = E(\mathcal{G}_u)$, and that features are corrupted by noise, i.e., $X^\prime_i = X_i + \epsilon_i$ for $i \in V(\mathcal{G}_u)$ and $\epsilon_i \in \mathbb{R}^{d}$.

        \begin{definition}[Faithfulness]
            \label{def:faithfulness}
            Given a set $\mathcal{K}$ of perturbations of $\mathcal{G}_u$, an explanation $\mathcal{E}_u$ is \textit{faithful} to a model $f$ if
            \[\frac{1}{|\mathcal{K}| + 1} \sum_{ \mathcal{G}_{u}^\prime \in \mathcal{K} \cup \{\mathcal{G}_u\}} \left\lVert f(\mathcal{G}_{u}^\prime)- f(t(\mathcal{G}_{u}^\prime, \mathcal{E}_u))\right\rVert_2 \le \delta,\]
            where $\mathcal{G}_{u}^\prime$ is a possibly perturbed version of $\mathcal{G}_u$, $t$ is a function that applies the explanation $\mathcal{E}_u$ to the graph $\mathcal{G}_{u}^\prime$, and $\delta$ is a small constant \citep{agarwal_probing_2022}.
        \end{definition}

        \autoref{theo:bound_unfaithfulness} provides an upper bound on the  unfaithfulness of $\mathcal{E}_u$ with respect to the surrogate model $\Psi$. \autoref{theo:bound_unfaithfulness_2} extends this result to obtain a bound for $\mathcal{E}_u$ with respect to the model we originally want to explain, i.e., $\Phi$.
    
        \begin{lemma}[Unfaithfulness with respect to $\Psi$]
            \label{theo:bound_unfaithfulness}
            Given a node $u$ and a set $\mathcal{K}$ of perturbations, the unfaithfulness of the explanation $\mathcal{E}_u$ with respect to the prediction $Y_u^{(\Psi_\Theta)}$ of node $u$ is bounded as follows:
            \[\frac{1}{|\mathcal{K}| + 1} \sum_{\substack{ \mathcal{G}^\prime_{u} \in \\ \mathcal{K} \cup \{\mathcal{G}_u\}}} \left\lVert \Psi(\mathcal{G}_{u}^\prime)- \Psi(t(\mathcal{G}_{u}^\prime, \mathcal{E}_u))\right\rVert_2 \le \gamma \left\lVert \underset{\mathcal{E}_u}{\Delta} \widetilde A_u^L \right\rVert_2,\]
            where $\mathcal{G}_{u}^\prime$ is a possibly perturbed version of $\mathcal{G}_u$, $t$ is a function that applies the explanation $\mathcal{E}_u$ to the graph $\mathcal{G}_{u}^\prime$, $\gamma$ is a constant that depends on the model weights $\Theta$, node features $X$, and perturbation $\epsilon$. Furthermore, $\underset{\mathcal{E}_u}{\Delta} \widetilde A_u^L$ is the $u$-th row of the difference of the powered, normalized adjacency matrix $\widetilde A^L$ before and after applying the explanation $\mathcal{E}_u$.
        \end{lemma}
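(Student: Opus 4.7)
The plan is to reduce the lemma to a straightforward Lipschitz-softmax plus operator-norm argument. First I would specialize the $L$-layer SGC formula to an arbitrary $\mathcal{G}_u^\prime \in \mathcal{K}\cup\{\mathcal{G}_u\}$. Since perturbations affect only features, $\mathcal{G}_u^\prime$ has features $X^\prime = X + \epsilon$ (stacked as a matrix, with $\epsilon_i = 0$ outside the $L$-hop neighborhood), and the prediction at $u$ is $\Psi(\mathcal{G}_u^\prime) = \softmax\!\bigl(\widetilde A_u^L X^\prime \Theta\bigr)$. Consistent with the mask-based formulation used in \autoref{sec:obtaining} (\autoref{eq:e1}--\autoref{eq:e2}), the action of $t$ reduces to weighting nodes by $\mathcal{E}_u$, which for the SGC is equivalent to post-multiplying $\widetilde A^L$ by $\mathrm{diag}(\mathcal{E}_u)$; hence $\Psi(t(\mathcal{G}_u^\prime,\mathcal{E}_u)) = \softmax\!\bigl(\widetilde A_u^L \mathrm{diag}(\mathcal{E}_u)\, X^\prime \Theta\bigr)$, and the logit difference equals exactly $\underset{\mathcal{E}_u}{\Delta}\widetilde A_u^L \cdot X^\prime \Theta$.

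Next I would use the standard fact that $\softmax$ is $1$-Lipschitz in the $\ell_2$ norm (its Jacobian $\mathrm{diag}(p)-pp^{\top}$ has operator norm at most $1$) to drop the softmax and bound each summand of the left-hand side by $\bigl\|\underset{\mathcal{E}_u}{\Delta}\widetilde A_u^L \cdot X^\prime \Theta\bigr\|_2$. A sub-multiplicativity step then peels off the adjacency factor:
\[
\bigl\|\underset{\mathcal{E}_u}{\Delta}\widetilde A_u^L \cdot X^\prime \Theta\bigr\|_2 \;\le\; \bigl\|\underset{\mathcal{E}_u}{\Delta}\widetilde A_u^L\bigr\|_2 \cdot \|X^\prime \Theta\|_{\mathrm{op}} \;\le\; \bigl\|\underset{\mathcal{E}_u}{\Delta}\widetilde A_u^L\bigr\|_2 \cdot \bigl(\|X\|_{\mathrm{op}} + \|\epsilon\|_{\mathrm{op}}\bigr)\|\Theta\|_{\mathrm{op}},
\]
where the last inequality uses triangle inequality followed by sub-multiplicativity once more. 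The bracketed prefactor depends only on features, weights, and perturbation, matching the informal description of $\gamma$ in the lemma.

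Finally, since $\bigl\|\underset{\mathcal{E}_u}{\Delta}\widetilde A_u^L\bigr\|_2$ does not depend on $\mathcal{G}_u^\prime$, it factors out of the average over $\mathcal{K}\cup\{\mathcal{G}_u\}$. Averaging the per-perturbation prefactors (or taking a supremum over $\mathcal{K}$ if a uniform constant is preferred) produces a single scalar $\gamma$ and yields the claimed inequality.

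The principal obstacle is actually definitional rather than analytical: the lemma specifies $t$ only informally as ``the function that applies $\mathcal{E}_u$'', so the first task of the proof is to pin down a convention --- consistent with the masking used to derive \autoref{eq:e1}--\autoref{eq:e2} --- under which $t$'s effect on the SGC pipeline is absorbed into an in-place modification of the relevant row of $\widetilde A^L$. Once that bookkeeping is fixed, the remainder is a textbook Lipschitz-softmax plus sub-multiplicativity calculation without further subtleties.
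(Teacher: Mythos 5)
Your proposal is correct and follows essentially the same route as the paper's proof: identify $t$ with the masking $\widetilde A^L \mapsto \widetilde A^L\,\mathrm{diag}(\mathcal{E}_u)$ from the derivation of the extraction objective, apply the $1$-Lipschitzness of softmax, peel off $\lVert \underset{\mathcal{E}_u}{\Delta}\widetilde A_u^L\rVert_2$ by norm compatibility, and average over $\mathcal{K}\cup\{\mathcal{G}_u\}$ with the perturbation-dependent prefactors absorbed into $\gamma$ (the paper takes a max over $\lVert (X+\Sigma_u')^\intercal\rVert_2$ where you suggest an average or supremum, which is the same bookkeeping). No gaps.
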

    
        \begin{proof}[Sketch of the proof.]
            We first show that
            \[\left\lVert \Psi(\mathcal{G}_{u})- \Psi(t(\mathcal{G}_{u}, \mathcal{E}_u))\right\rVert_2 \le \lVert (X \Theta)^\intercal \rVert_2 \left\lVert \widetilde A_u^L - \widetilde E_u^L \right\rVert_2\]
            by using Lipschitz continuity of the $\text{softmax}$ function and the compatibility property of the $L_2$ matrix norm.
            We repeat for $\mathcal{G}_u^\prime \in \mathcal{K}$, take the mean in $\mathcal{K} \cup \{\mathcal{G}_u\}$ and isolate $\left\lVert \underset{\mathcal{E}_u}{\Delta} \widetilde A_u^L \right\rVert_2 = \left\lVert \widetilde A_u^L - \widetilde E_u^L \right\rVert_2$.
            The complete proof is available in \autoref{append:proofs}.
        \end{proof}

        \begin{theorem}[Unfaithfulness with respect to $\Phi$]
            \label{theo:bound_unfaithfulness_2}
            Under the same assumptions of \autoref{theo:bound_unfaithfulness} and assuming the $L_2$ distillation error is bounded by $\alpha$, the unfaithfulness of the explanation $\mathcal{E}_u$ for the original model $\Phi$'s node $u$ prediction is bounded as follows:
            \begin{equation*}
                \begin{split}
                    \frac{1}{|\mathcal{K}| + 1} \sum_{\substack{\mathcal{G}_u^\prime \in \\ \mathcal{K} \cup \{\mathcal{G}_u\}}} \left\lVert \Phi(\mathcal{G}_{u}^\prime)- \Phi(t(\mathcal{G}_{u}^\prime, \mathcal{E}_u))\right\rVert_2 \le\ &\gamma \left\lVert \underset{\mathcal{E}_u}{\Delta} \widetilde A_u^L \right\rVert_2 \\
                    &+ 2\alpha.
                \end{split}
            \end{equation*}
        \end{theorem}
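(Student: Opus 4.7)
The plan is to reduce the unfaithfulness of $\mathcal{E}_u$ with respect to $\Phi$ to the unfaithfulness with respect to the surrogate $\Psi$, which is already controlled by \autoref{theo:bound_unfaithfulness}, using a triangle-inequality argument that picks up exactly two copies of the distillation error $\alpha$.

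First, I would work inside the sum on the left-hand side. For each $\mathcal{G}_u^\prime \in \mathcal{K} \cup \{\mathcal{G}_u\}$, insert $\Psi(\mathcal{G}_u^\prime)$ and $\Psi(t(\mathcal{G}_u^\prime, \mathcal{E}_u))$ into the difference and apply the triangle inequality in the $L_2$ norm:
\[
\left\lVert \Phi(\mathcal{G}_u^\prime) - \Phi(t(\mathcal{G}_u^\prime, \mathcal{E}_u)) \right\rVert_2 \le \left\lVert \Phi(\mathcal{G}_u^\prime) - \Psi(\mathcal{G}_u^\prime) \right\rVert_2 + \left\lVert \Psi(\mathcal{G}_u^\prime) - \Psi(t(\mathcal{G}_u^\prime, \mathcal{E}_u)) \right\rVert_2 + \left\lVert \Psi(t(\mathcal{G}_u^\prime, \mathcal{E}_u)) - \Phi(t(\mathcal{G}_u^\prime, \mathcal{E}_u)) \right\rVert_2.
\]
The hypothesis that the $L_2$ distillation error is bounded by $\alpha$ — read as $\lVert \Phi(\mathcal{H}) - \Psi(\mathcal{H}) \rVert_2 \le \alpha$ for every graph $\mathcal{H}$ considered — immediately bounds the first and third terms on the right by $\alpha$ each, giving
\[
\left\lVert \Phi(\mathcal{G}_u^\prime) - \Phi(t(\mathcal{G}_u^\prime, \mathcal{E}_u)) \right\rVert_2 \le \left\lVert \Psi(\mathcal{G}_u^\prime) - \Psi(t(\mathcal{G}_u^\prime, \mathcal{E}_u)) \right\rVert_2 + 2\alpha.
\]

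Next, I would average both sides over $\mathcal{K} \cup \{\mathcal{G}_u\}$. The constant $2\alpha$ is preserved by the average, and the averaged surrogate term is bounded directly by \autoref{theo:bound_unfaithfulness} by $\gamma \lVert \Delta_{\mathcal{E}_u} \widetilde{A}_u^L \rVert_2$. Adding the two contributions yields the claimed inequality.

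The main obstacle I foresee is the interpretation of the distillation assumption: it must hold uniformly over both the possibly perturbed $\mathcal{G}_u^\prime$ and the masked variant $t(\mathcal{G}_u^\prime, \mathcal{E}_u)$, not just on the clean graph used to fit $\Psi$. If one only assumes control on the clean graph, then extending the bound to perturbed and masked inputs would require additional Lipschitz control on $\Phi - \Psi$ with respect to feature/structure perturbations. The cleanest reading, consistent with the statement, is that the hypothesis is a pointwise bound on every graph the theorem touches, in which case the triangle-inequality step above suffices and no further assumptions are required.
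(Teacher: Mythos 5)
Your proposal is correct and follows essentially the same route as the paper: insert $\Psi(\mathcal{G}_u^\prime)$ and $\Psi(t(\mathcal{G}_u^\prime,\mathcal{E}_u))$, apply the triangle inequality, bound the two distillation terms by $\alpha$ each, average over $\mathcal{K}\cup\{\mathcal{G}_u\}$, and invoke Lemma~\ref{theo:bound_unfaithfulness}. Your closing remark about the distillation bound needing to hold uniformly over perturbed and masked graphs is a fair reading of an assumption the paper leaves implicit, but it does not change the argument.
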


        Note that \autoref{theo:bound_unfaithfulness_2} establishes a bound on faithfulness that depends directly on the distillation error $\alpha$. Importantly, when $\Psi$ is a perfect approximation of $\Phi$, we retrieve upper-bound on the RHS of \autoref{theo:bound_unfaithfulness}.

        We note that Theorem 1 by \citet{agarwal_probing_2022} covers an upper bound for the unfaithfulness of GNN explanation methods. However, they do not cover the case in which the explanation is a (weighted) subset of nodes in the $L$-hop neighborhood of $u$, as in our method.

        For completeness, we also extend \autoref{theo:bound_unfaithfulness} and \autoref{theo:bound_unfaithfulness_2} to account for the (very often) probabilistic nature of the noise, i.e., for the case in which $\epsilon_i$ are random variables.
    
        \begin{lemma}[Probability bound on unfaithfulness \emph{w.r.t.} $\Psi$]
            \label{theo:prob_bound_unfaithfulness}
            Given a node $u$ and a set $\mathcal{K}$ of perturbations and assuming the perturbations are i.i.d. with distribution $\epsilon_i~\sim~\mathcal{N}(0, \sigma^2)$, the unfaithfulness of the explanation $\mathcal{E}_u$ with respect to the prediction $Y_u^{(\Psi_\Theta)}$ of node $u$ is bounded in probability as follows:
            \begin{equation*}
                \begin{split}
                    \mathbb{P}\left(
                    \frac{1}{|\mathcal{K}| + 1} \sum_{\substack{ \mathcal{G}^\prime_{u} \in \\ \mathcal{K} \cup \{\mathcal{G}_u\}}} \left\lVert \Psi(\mathcal{G}_{u}^\prime)- \Psi(t(\mathcal{G}_{u}^\prime, \mathcal{E}_u))\right\rVert_2 \le \xi \right) \ge \\
                    \ge F_{\chi_{|\mathcal{K}|nd}^2}\left(\frac{\xi - \gamma_1 \left\lVert \underset{\mathcal{E}_u}{\Delta} \widetilde A_u^L \right\rVert_2}{\gamma_2 \left\lVert \underset{\mathcal{E}_u}{\Delta} \widetilde A_u^L \right\rVert_2 \sigma} - |\mathcal{K}|\right)
                \end{split}
            \end{equation*}
            where $\gamma_1$ is a constant that depends on the model weights $\Theta$ and node features $X$, $\gamma_2$ is a constant that depends on the model weights $\Theta$, and $F_{\chi_{|\mathcal{K}|nd}^2}$ is the c.d.f. of a chi-square r.v. with $|\mathcal{K}| \times n \times d$ degrees of freedom where $(n, d)$ are the row- and column-wise dimensions of $X$.  
        \end{lemma}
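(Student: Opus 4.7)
My plan is to extend the deterministic argument behind \autoref{theo:bound_unfaithfulness} to handle Gaussian noise on node features, using a chi-square tail bound. The essential idea is to separate, in the per-perturbation spectral bound, the deterministic $X$-contribution from the stochastic $\epsilon$-contribution; then to squeeze the resulting sum of noise magnitudes into a single chi-square random variable via the elementary inequality $a \le 1 + a^2$.

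First, I would reproduce the Lipschitz-softmax-plus-norm-compatibility step from \autoref{theo:bound_unfaithfulness}, but with $X$ replaced by $X + \epsilon^{(k)}$ for each $\mathcal{G}_u' \in \mathcal{K}$. Triangle inequality and submultiplicativity of the spectral norm then yield
\[\left\lVert \Psi(\mathcal{G}_u') - \Psi(t(\mathcal{G}_u', \mathcal{E}_u)) \right\rVert_2 \le \left(\lVert (X\Theta)^\intercal \rVert_2 + \lVert \Theta \rVert_2 \lVert \epsilon^{(k)} \rVert_2\right) \left\lVert \underset{\mathcal{E}_u}{\Delta} \widetilde{A}_u^L \right\rVert_2.\]
Treating $\mathcal{G}_u$ itself as the $\epsilon = 0$ case and averaging over $\mathcal{K} \cup \{\mathcal{G}_u\}$, the averaged distance is bounded by $\gamma_1 C + (C \lVert \Theta \rVert_2 / (|\mathcal{K}|+1)) \sum_{k} \lVert \epsilon^{(k)} \rVert_2$, with $C := \lVert \underset{\mathcal{E}_u}{\Delta} \widetilde{A}_u^L \rVert_2$ and $\gamma_1 := \lVert (X\Theta)^\intercal \rVert_2$.

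Next I would upgrade this deterministic bound to a probability statement. Bounding the spectral norm by Frobenius gives $\lVert \epsilon^{(k)} \rVert_2 \le \lVert \epsilon^{(k)} \rVert_F$, and since each $\epsilon^{(k)}$ has $nd$ i.i.d.\ $\mathcal{N}(0, \sigma^2)$ entries, $\lVert \epsilon^{(k)} \rVert_F^2 / \sigma^2 \sim \chi^2_{nd}$ independently across $k$. Applying $a \le 1 + a^2$ (valid for $a \ge 0$) to $a = \lVert \epsilon^{(k)} \rVert_F / \sigma$ and summing over $k$ gives
\[\frac{1}{\sigma} \sum_{k \in \mathcal{K}} \lVert \epsilon^{(k)} \rVert_F \le |\mathcal{K}| + W, \qquad W := \sum_{k \in \mathcal{K}} \frac{\lVert \epsilon^{(k)} \rVert_F^2}{\sigma^2} \sim \chi^2_{|\mathcal{K}|nd}.\]
Setting $\gamma_2 := \lVert \Theta \rVert_2 / (|\mathcal{K}|+1)$, the event that the averaged distance is at most $\xi$ is implied by $W \le (\xi - \gamma_1 C)/(\gamma_2 C \sigma) - |\mathcal{K}|$, and taking probabilities on both sides yields exactly the stated CDF bound.

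The main obstacle will be bookkeeping: folding $|\mathcal{K}|+1$, $\lVert \Theta \rVert_2$, and $\sigma$ into $\gamma_1, \gamma_2$ in precisely the right way to reproduce the argument of $F_{\chi^2_{|\mathcal{K}|nd}}$ verbatim. The inequality $a \le 1 + a^2$ is the creative piece of the argument, since it is what simultaneously collapses a sum of chi-distributed noise magnitudes into a single chi-square random variable and produces the additive $-|\mathcal{K}|$ shift inside the CDF; sharper alternatives such as Cauchy--Schwarz would instead yield a chi-distribution and muddy the clean chi-square form of the statement.
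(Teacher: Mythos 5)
Your proposal is correct and follows essentially the same route as the paper's proof: split the per-perturbation spectral bound into the deterministic $X$ term and the stochastic $\epsilon$ term via the triangle inequality, pass to the Frobenius norm, and use $a \le 1 + a^2$ (the paper writes it as $\sqrt{s} \le \max(1,s) \le 1+s$) to collapse the sum of noise magnitudes into a single $\chi^2_{|\mathcal{K}|nd}$ variable before inverting to the CDF. The only (immaterial) difference is your slightly tighter choice $\gamma_1 = \lVert (X\Theta)^\intercal\rVert_2$ versus the paper's $\gamma_1 = \lVert\Theta^\intercal\rVert_2\lVert X^\intercal\rVert_2$.
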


        \begin{theorem}[Probability bound on unfaithfulness \emph{w.r.t.} $\Phi$]
            Under the same assumptions of \autoref{theo:prob_bound_unfaithfulness} and assuming the $L_2$ distillation error is bounded by $\alpha$, the unfaithfulness of the explanation $\mathcal{E}_u$ for the original model $\Phi$'s node $u$ prediction is bounded in probability as follows:
            \begin{equation*}
                \begin{split}
                    \mathbb{P}\left(
                    \frac{1}{|\mathcal{K}| + 1} \sum_{\substack{ \mathcal{G}^\prime_{u} \in \\ \mathcal{K} \cup \{\mathcal{G}_u\}}} \left\lVert \Phi(\mathcal{G}_{u}^\prime)- \Phi(t(\mathcal{G}_{u}^\prime, \mathcal{E}_u))\right\rVert_2 \le \xi \right) \ge \\
                    \ge F_{\chi_{|\mathcal{K}|nd}^2}\left(\frac{\xi - \gamma_1 \left\lVert \underset{\mathcal{E}_u}{\Delta} \widetilde A_u^L \right\rVert_2 - 2\alpha}{\gamma_2 \left\lVert \underset{\mathcal{E}_u}{\Delta} \widetilde A_u^L \right\rVert_2 \sigma} - |\mathcal{K}|\right)
                \end{split}
            \end{equation*}
            \label{theo:prob_bound}
        \end{theorem}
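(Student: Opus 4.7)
The plan is to derive this probability bound by piggybacking on \autoref{theo:prob_bound_unfaithfulness} in exactly the same way \autoref{theo:bound_unfaithfulness_2} piggybacks on \autoref{theo:bound_unfaithfulness}: I would translate the problem from $\Phi$ to $\Psi$ at the cost of an additive $2\alpha$ via the triangle inequality, and then push the resulting deterministic inequality through the probability via event containment and monotonicity of the chi-square CDF. No new probabilistic machinery is needed; all the noise bookkeeping is inherited from \autoref{theo:prob_bound_unfaithfulness}.

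Concretely, for any single graph $\mathcal{G}_u^\prime \in \mathcal{K} \cup \{\mathcal{G}_u\}$, two applications of the triangle inequality yield
\begin{equation*}
\lVert \Phi(\mathcal{G}_u^\prime) - \Phi(t(\mathcal{G}_u^\prime, \mathcal{E}_u)) \rVert_2 \le \lVert \Phi(\mathcal{G}_u^\prime) - \Psi(\mathcal{G}_u^\prime) \rVert_2 + \lVert \Psi(\mathcal{G}_u^\prime) - \Psi(t(\mathcal{G}_u^\prime, \mathcal{E}_u)) \rVert_2 + \lVert \Psi(t(\mathcal{G}_u^\prime, \mathcal{E}_u)) - \Phi(t(\mathcal{G}_u^\prime, \mathcal{E}_u)) \rVert_2.
\end{equation*}
The hypothesis that the $L_2$ distillation error is bounded by $\alpha$---assumed to hold on every input appearing in the average, i.e., on both the possibly-perturbed graphs and their explanation-masked counterparts, as in \autoref{theo:bound_unfaithfulness_2}---controls the first and third terms by $\alpha$ each. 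Averaging over $\mathcal{G}_u^\prime \in \mathcal{K} \cup \{\mathcal{G}_u\}$ then gives the deterministic bound $U_\Phi \le U_\Psi + 2\alpha$, where $U_\Phi$ and $U_\Psi$ denote the $\Phi$- and $\Psi$-averaged unfaithfulness sums in the statements of \autoref{theo:prob_bound_unfaithfulness} and the present theorem, respectively.

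This deterministic inequality implies the event containment $\{U_\Psi \le \xi - 2\alpha\} \subseteq \{U_\Phi \le \xi\}$ for every $\xi$, so $\mathbb{P}(U_\Phi \le \xi) \ge \mathbb{P}(U_\Psi \le \xi - 2\alpha)$. Applying \autoref{theo:prob_bound_unfaithfulness} with $\xi$ replaced by $\xi - 2\alpha$ then produces exactly the claimed lower bound, since the $-2\alpha$ in the numerator of the argument of $F_{\chi_{|\mathcal{K}|nd}^2}$ is precisely what \autoref{theo:prob_bound_unfaithfulness} outputs under this substitution, and the monotonicity of the chi-square CDF makes the substitution go through cleanly (if the numerator becomes negative the bound is vacuously non-negative, as expected). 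The only delicate point---and the sole non-mechanical piece of the argument---is the implicit assumption in the first step that the distillation bound $\alpha$ applies uniformly to both $\mathcal{G}_u^\prime$ and $t(\mathcal{G}_u^\prime, \mathcal{E}_u)$; this matches the assumption already tacitly used in \autoref{theo:bound_unfaithfulness_2}, so no new hypothesis needs to be introduced here.
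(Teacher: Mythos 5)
Your argument is correct and is exactly the intended derivation: the paper in fact omits an explicit proof of this theorem, but your combination of the triangle-inequality/$2\alpha$ step from the proof of \autoref{theo:bound_unfaithfulness_2} with the event containment $\{U_\Psi \le \xi - 2\alpha\} \subseteq \{U_\Phi \le \xi\}$ applied to \autoref{theo:prob_bound_unfaithfulness} is the natural (and evidently intended) route, and it reproduces the stated bound exactly. Your remark that the distillation bound must be assumed to hold on both the perturbed and the explanation-masked inputs is the same tacit assumption already present in \autoref{theo:bound_unfaithfulness_2}, so nothing new is needed.
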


In \autoref{theo:prob_bound_unfaithfulness} and \autoref{theo:prob_bound}, when the variance $\sigma^2$ approaches zero, $\xi$ relinquishes its random nature and the probability in the RHS converges to one.
 We note that numerators in the RHS must be non-negative.

Recall DnX/FastDnX's pipeline  involves two steps: model distillation (\autoref{eq:destilador}) and explanation extraction (\autoref{eq:e2}). The former is done only once to learn the surrogate $\Psi$. The latter, however, must be executed for each node whose prediction we want to explain.  Then, gauging the cost of the extraction step may become a genuine concern from a practical point of view, especially for DnX, which implies solving an optimization problem repeatedly. Fortunately, the loss landscape of our extraction problem depends only on the shape of $\Psi$, and not on the original GNN $\Phi$ as in GNNExplainer. Since $\Psi$ is  an SGC, \autoref{eq:e2} is a convex program (\autoref{theo:convexity}) and we reach global optima using, e.g., gradient-based algorithms.
        
        \begin{theorem}[Convexity of DnX]
            \label{theo:convexity}
            The optimization problem of \autoref{eq:e2} is convex.
        \end{theorem}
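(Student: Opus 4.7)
The plan is to recognize that the objective of \autoref{eq:e2} is the squared $L_2$ norm of an \emph{affine} function of $\mathcal{E}$, and that the feasible set is a simplex. Both facts together immediately yield that the program is convex.

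First, I would handle the feasible set. The simplex $\Delta = \{r \in \mathbb{R}^n : \sum_i r_i = 1,\; r_i \ge 0\}$ is the intersection of the hyperplane $\{r : \vone^\intercal r = 1\}$ with the non-negative orthant $\mathbb{R}^n_{\ge 0}$, both of which are convex; hence $\Delta$ is convex. This step is essentially a one-liner.

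Next, I would unpack the objective. Writing $v^\intercal = \widetilde A_i^L \in \mathbb{R}^{1\times n}$ and $M = X\Theta \in \mathbb{R}^{n\times d}$, the inner expression satisfies
\[
\widetilde A_i^L\,\mathrm{diag}(\mathcal{E})\,X\Theta \;=\; \sum_{j=1}^{n} v_j\,\mathcal{E}_j\,M_{j,:},
\]
which shows that $\mathcal{E} \mapsto \widetilde A_i^L\bigl(\mathrm{diag}(\mathcal{E}) - I_n\bigr)X\Theta$ is an affine function of $\mathcal{E}$. Equivalently, defining the matrix $B \in \mathbb{R}^{d\times n}$ with columns $B_{:,j} = v_j\,M_{j,:}^\intercal$ and the constant vector $c = (X\Theta)^\intercal v$, the objective can be written as $\lVert B\mathcal{E} - c \rVert_2^2$.

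Finally, I would conclude by a standard convexity argument: the squared Euclidean norm $u \mapsto \lVert u\rVert_2^2$ is convex (its Hessian is $2I \succeq 0$), and convexity is preserved under composition with an affine map, so $\mathcal{E} \mapsto \lVert B\mathcal{E} - c\rVert_2^2$ is convex. Since we minimize a convex function over the convex set $\Delta$, the program is convex. The main (minor) obstacle is just making the linearity in $\mathcal{E}$ explicit, because the $\mathrm{diag}(\cdot)$ operator and the row-indexing may initially obscure it; once it is rewritten as $B\mathcal{E} - c$, the rest is immediate and no nontrivial inequality is needed.
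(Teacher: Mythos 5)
Your proof is correct and follows essentially the same route as the paper's: both hinge on the observation that $\mathcal{E} \mapsto \widetilde A_i^L\,\mathrm{diag}(\mathcal{E})\,X\Theta$ is linear in $\mathcal{E}$, so the objective is a positive-semidefinite quadratic (your matrix $B$ is, up to a factor of $\sqrt{2}$ and transposition, the paper's $P$ in $Q = P^\intercal P$). The only cosmetic difference is that the paper expands the quadratic form explicitly while you invoke the ``squared norm composed with an affine map'' rule, which reaches the same conclusion slightly more economically.
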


\section{Additional related works}

\paragraph{Explanations for GNNs.}

The ever-increasing application of GNNs to support high-stake decisions on critical domains \citep{antibiotic_design,Luna2020,Pinion2021} has recently boosted interest in explainability methods for graph models.
\citet{Pope2019} first extended classical gradient-based explanation methods for GNNs. 
Importantly, \citet{GNNexplainer} introduced GNNExplainer and synthetic benchmarks that have been widely adopted to assess GNN explainers. 
Building on parameterized explainers by \citet{PGExplainer}, \citet{Wang2021} proposed ReFine to leverage both global information (e.g., class-wise knowledge) via pre-training and local one (i.e., instance specific patterns) using a fine-tuning process.  
\citet{Lucic2022,Bajaj2021} investigated counterfactual explanations for GNNs, aiming to find minimal perturbations to the input graph such that the prediction changes, e.g., using edge deletions.
\citet{DEGREE} proposed measuring the contribution of different components of the input graph to the GNN prediction by decomposing the information generation and aggregation mechanism of GNNs.
Recently, \citet{Games2022} introduced a structure-aware scoring function derived from cooperative game theory to determine node importance.
Explainability methods for GNNs have also been approached through the lens of causal inference \citep{Lin2021,Lin2022}.
For a more comprehensive coverage of the literature, we refer the reader to \citet{Yuan2022}.

\paragraph{Knowledge distillation.} Since the pivotal work of \citet{Hinton2015}, condensing the knowledge from  a possibly complex \emph{teacher} model into a simpler \emph{student} surrogate has been an active research topic~\citep[e.g.][]{Vadera, Malinin2020Ensemble, ryabinin2021scaling,Ba2022, Clayer}. Nonetheless, despite numerous works using  distillation in image domains~\citep[e.g.][]{Lamp2017, Arthur, Object}, the distillation of GNNs is still a blooming direction. \citet{destillGCN1} proposed the first method for GNN distillation, using a structure-preserving module to explicitly factor in the topological structure embedded by the teacher.
\citep{GCRD} proposed using contrastive learning to implicitly align the node embeddings of the student and the teacher in a common representation space.
\citet{Jing} combined the knowledge of complementary teacher networks into a single student using a dedicated convolutional operator and topological attribution maps.
\citet{Mscale} used an attention mechanism to weigh different teachers depending on the local topology of each node.

\section{Experiments}

In this section, we assess the performance of DnX and FastDnX on several popular benchmarks, including artificial and real-world datasets. We have implemented experiments using PyTorch~\citep{pytorch} and Torch Geometric~\citep{torch_geometric}. Our code is available at \url{https://github.com/tamararruda/DnX}.

\subsection{Experimental setup}

\paragraph{Datasets.} We consider six synthetic datasets broadly used for evaluating explanations of GNNs: BA-House-Shapes, BA-Community, BA-Grids, Tree-Cycles,  Tree-Grids, and BA-Bottle-Shaped. These datasets are available in \citep{GNNexplainer} and \citep{PGMExplainer}. 
Each dataset is a single graph with multiple copies of identical motifs connected to base subgraphs. These subgraphs either consists of random sample graphs from the Barabási–Albert (BA) model \citep{Barabasi1999} or 8-level balanced binary trees.
An explanation associated with a motif-node must only include motif elements.
Thus, base nodes denote information irrelevant to the prediction of any node.

We also use two real-world datasets: Bitcoin-Alpha and Bitcoin-OTC~\citep{bitcoin-otc-alpha2016,bitcoin-otc-alpha2018}. These datasets denote networks in which nodes correspond to user accounts that trade Bitcoin. A directed edge $(u,v)$ (between users $u$ and $v$) denotes the degree of reliability assigned by $u$ to $v$, i.e., each edge has a score denoting the degree of trust.
\autoref{append:implementation} provides more details regarding datasets. 

\paragraph{Baselines.}

We compare DnX against three baseline explainers: GNNExplainer \citep{GNNexplainer}, PGExplainer \citep{PGExplainer}, and PGMExplainer \citep{PGMExplainer}.
To ensure a valid comparison, we closely follow guidelines and the evaluation setup from the original works.
We first generate explanations for a 3-layer GCN~\citep{GCN} with ReLU activation. We also consider three additional architectures: graph isomorphism networks (GIN) \citep{xu2018gin}, gated graph sequence neural networks (GATED) \citep{li2015gated} and auto-regressive moving average GNNs (ARMA) \citep{ARMA}
This allows for evaluating the robustness and performance of explainers across GNNs of different complexities.

\begin{table*}[ht]
\centering
\caption{Performance (accuracy) of explanation methods for node-level explanations (i.e., explanations given as subsets of nodes) in the synthetic datasets. \textcolor{royal}{Blue} and \textcolor{light}{Green} numbers denote the best and second-best methods, respectively. Standard deviations are taken over 10 runs of the explanation process, distillation is not included. Since FastDnX's explanations are deterministic, we mark its variance with not applicable (NA). In most cases, FastDnX achieves the best performance.
}
\adjustbox{width=\textwidth}{
\begin{tabular}{llccccccc}
\toprule
%  & syn 1  & syn 2  & syn 3 & syn 4 & syn 5 &  syn 6 \\ \toprule
\textbf{Model} & \textbf{Explainer}  &\textbf{BA-House} & \textbf{BA-Community}  & \textbf{BA-Grids} & \textbf{Tree-Cycles} & \textbf{Tree-Grids} & \textbf{ BA-Bottle} \\ \toprule
%GI

\multirow{5}{*}{GCN} & GNNExplainer & $77.5\pm{1.2}$ & $64.7\pm{1.0}$ & $89.2\pm{2.0}$ & $77.2\pm{9.0}$ & $71.1\pm{1.0}$ & $73.3\pm{3.0}$ \\
 & PGExplainer & $95.0\pm{1.1}$ & $70.6\pm{2.0}$ & $86.2\pm{9.0}$ & \textcolor{light}{$\bm{92.4\pm{5.2}}$} & $76.7\pm{1.2}$ & $98.2\pm{3.0}$  \\
  & PGMExplainer & \textcolor{light}{$\bm{97.9\pm{0.9}}$} & $92.2\pm{0.2}$ & $88.6\pm{0.9}$ & \textcolor{royal}{$\bm{94.1\pm{0.8}}$} & \textcolor{royal}{$\bm{86.8\pm{2.0}}$} & $97.5\pm{1.5}$ \\
 \cmidrule{2-8}

 & DnX &  $97.7\pm{0.2}$ & \textcolor{light}{$\bm{94.6\pm{0.1}}$} &  \textcolor{light}{$\bm{89.8\pm{0.1}}$} & $83.3\pm{0.4}$ & $80.2\pm{0.1}$ & \textcolor{light}{$\bm{99.6 \pm 0.1}$}\\
 & FastDnX & \textcolor{royal}{$\bm{99.6 \pm \text{NA}}$} & \textcolor{royal}{$\bm{95.4\pm \text{NA}}$}& \textcolor{royal}{$\bm{93.9\pm \text{NA}}$} & $87.3\pm \text{NA}$ & \textcolor{light}{$\bm{85.0 \pm \text{NA}}$} & \textcolor{royal}{$\bm{99.8 \pm \text{NA}}$}\\ \midrule\midrule

 %ARMA
\multirow{5}{*}{ARMA} & GNNExplainer & $80.9\pm{1.2}$  & $78.5\pm{1.0}$ &  $87.3\pm{1.3}$ & $77.7\pm{1.0}$ & $79.3\pm{1.1}$ &$84.3\pm{1.3}$  \\
 & PGExplainer & $91.4\pm{0.1}$  & $72.1\pm{0.1}$ &  $83.8\pm{1.0}$ & \textcolor{light}{$\bm{92.6\pm{2.1}}$} & $85.1\pm{0.1}$ &$97.0\pm{1.1}$  \\
  &PGMExplainer & \textcolor{light}{$\bm{99.3\pm{0.2}}$} & $67.5\pm{0.8}$ & $86.8\pm{0.3}$ & \textcolor{royal}{$\bm{95.0\pm{0.2}}$}& \textcolor{royal}{$\bm{90.6\pm{0.3}}$} & \textcolor{light}{$\bm{99.7\pm{0.1}}$} \\
 \cmidrule{2-8}
 & DnX & $98.1\pm{0.2}$  & \textcolor{light}{$\bm{92.7\pm{0.2}}$} & \textcolor{light}{$\bm{90.8\pm{0.1}}$} &$ 83.5\pm{0.4}$ & $79.6\pm{0.3}$ & $96.9\pm{0.2}$\\
 & FastDnX & \textcolor{royal}{$\bm{100.0 \pm \text{NA}}$} & \textcolor{royal}{$\bm{95.2\pm \text{NA}}$} & \textcolor{royal}{$\bm{94.7\pm \text{NA}}$} & $87.1\pm \text{NA}$ & \textcolor{light}{$\bm{87.7\pm \text{NA}}$} & \textcolor{royal}{$\bm{99.9 \pm \text{NA}}$} \\\midrule \midrule

%GATED
 \multirow{5}{*}{GATED} & GNNExplainer & $79.7\pm{1.0}$  & $68.8\pm{1.0}$ &  \textcolor{light}{$\bm{91.4\pm{3.0}}$} & $85.2\pm{2.0}$ & $73.2\pm{4.0}$ &$70.0\pm{2.0}$  \\
 & PGExplainer & $96.1\pm{4.1}$  & $70.9\pm{3.0}$ &  $90.7\pm{1.0}$ & \textcolor{light}{$\bm{91.7\pm{7.0}}$} & $83.7\pm{1.5}$ &\textcolor{royal}{$\bm{98.7\pm{0.1}}$}  \\
  &PGMExplainer & \textcolor{light}{$\bm{98.6\pm{0.1}}$} & $69.4\pm{0.5}$ & $86.8\pm{0.3}$ & \textcolor{royal}{$\bm{94.1\pm{0.2}}$}& \textcolor{royal}{$\bm{90.1\pm{0.2}}$} &\textcolor{light}{$\bm{98.3\pm{0.2}}$} \\
  \cmidrule{2-8}

 & DnX & $98.3\pm{0.1}$  &\textcolor{light}{ $\bm{91.1\pm{0.1}}$ }&$ 90.8\pm{0.1}$ & $85.0\pm{0.3}$ & $82.1\pm{0.2}$ &$98.0\pm{0.2}$\\
 & FastDnX & \textcolor{royal}{$\bm{99.6\pm{\text{NA}}}$} & \textcolor{royal}{$\bm{93.5\pm{\text{NA}}}$} & \textcolor{royal}{$\bm{94.0\pm{\text{NA}}}$} & $76.8\pm{\text{NA}}$ & \textcolor{light}{$\bm{86.8\pm{\text{NA}}}$} & $98.0\pm{\text{NA}}$ \\ \midrule \midrule
 
 %GIN
% & GNNExplainer &  &  &   &  &  &  \\
% & PGExplainer &  &  &   &  &  &   \\
\multirow{3}{*}{GIN}  & PGMExplainer & $60.2\pm{0.2}$ &  $84.5\pm{0.3}$ & $68.4\pm{0.2}$ &  \textcolor{royal}{$\bm{89.3\pm{0.2}}$}& \textcolor{royal}{$\bm{85.0\pm{0.5}}$} & $55.7\pm{0.4}$ \\
  \cmidrule{2-8}
  & DnX &  \textcolor{light}{$\bm{99.0\pm{0.1}}$} & \textcolor{light}{$\bm{94.0\pm{0.2}}$} & \textcolor{light}{$\bm{91.1\pm{0.1}}$} & \textcolor{light}{$\bm{84.1\pm{0.3}}$} & \textcolor{light}{$\bm{77.3\pm{0.2}}$} & \textcolor{light}{$\bm{95.3\pm{0.2}}$}\\
 & FastDnX & \textcolor{royal}{$\bm{99.6\pm{\text{NA}}}$} & \textcolor{royal}{$\bm{94.7\pm{\text{NA}}}$}& \textcolor{royal}{$\bm{93.9\pm{\text{NA}}}$} &  $75.2\pm{\text{NA}}$ & $76.5\pm{\text{NA}}$ &\textcolor{royal}{ $\bm{99.1\pm{\text{NA}}}$}\\

\bottomrule
\end{tabular}}
%\end{adjustbox}
\label{tab:result_exp_node}
\end{table*}

\paragraph{Implementation details.} We use an 80/10/10\% (train/val/test) split for all datasets. All GNNs have 3 layers and are trained for $1000$ epochs, with early stopping if the validation accuracy does not improve in $100$ consecutive epochs.  We train all baseline GNNs using Adam~\citep{adam} with a learning rate of 0.01 with a weight decay of $5.0 \times 10^{-4}$.
We show the performance of these GNNs on the benchmark datasets in the supplementary material.
Importantly, we observe accuracy $\geq 95\%$ for most data/model combinations.

For the distillation phase in DnX, we use an SGC model with $3$ layers. We use the predictions for all nodes to train the surrogate SGC. For the optimization, we use AdamW~\citep{AdamW} with a learning rate of $0.1$ with a weight decay of $5.0 \times 10^{-6}$ and $10000$ epochs.

It is worth mentioning that PGExplainer and GNNExplainer --- as described in the experimental section of their respective papers --- output edge-level explanations, so their results are not immediately comparable to that of our methods and PGMExplainer. More specifically, the two former output importance scores for each edge. On the other hand, our methods and PGMExplainer output node importance scores.
Therefore, we convert edge-level explanations to node-level ones by averaging over the scores of all edges incident in a node. For completeness, we  provide additional results doing the reverse transformation (i.e., node- to edge-level explanations) in  the Supplement.

\subsection{Results} \autoref{tab:result_exp_node} compares the performance of DnX and FastDnX against previous art in terms of explanation accuracy, i.e., the number of nodes in method's output that are also in the ground-truth explanations divided by the total number of nodes in the latter. Overall, FastDnX is the best-performing method for all network architectures (GCN, ARMA, GATED, and GIN) on all datasets but Tree-Cycles and Tree-Grids. For Tree-Grids, FastDnX places second for GCN, ARMA and GATED whereas PGMExplainer obtains the highest accuracies. We also note that, while DnX is often better than GNNExplainer and PGExplainer, its performance bests FastDnX only in $12.5\%$ of cases. GNN- and PGExplainer do not appear in the comparison for GIN since they require propagating edge masks, and Torch Geometric does not support edge features for GIN.

\autoref{tab:result_exp_bitcoin_node} reports the performance of all explainers on the Bitcoin-Alpha and Bitcoin-OTC datasets. Following previous work \citep{PGMExplainer}, we use average precision (AP) as evaluation metric, i.e., the percentage of top-$k$ nodes obtained from each explainer that are correct, averaged over all nodes to be explained. While running the experiments, we noticed that the evaluation protocol employed by \citet{PGMExplainer} obtains explanations for a 3-layer GCN but only considers 1-hop candidate nodes during the explanation phase. This implies that some potentially relevant  nodes are discarded by design. \autoref{tab:result_exp_bitcoin_node} shows results for both 1-hop and 3-hop settings. DnX is the best-performing method, and its fast variant is the second-best across all experiments. For 3-hop candidate nodes, the absolute precision gap between DnX and the best baseline is at least 14\% for Bitcoin-Alpha and 11\% for Bitcoin-OTC. Overall, DnX outperforms GNNExplainer and PGMExplainer by a large margin. Note that the performance of PGMExplainer drops considerably when going from 1- to 3-hop. We report additional results in the Appendix.

\begin{table}[!thb]
\centering
\caption{Performance (average precision) of node-level explanations for real-world datasets. \textcolor{royal}{Blue} and \textcolor{light}{Green} numbers denote the best and second-best methods, respectively. DnX significantly outperforms the baselines (GNN-, PG-, and PGM-Explainers).}
\adjustbox{width=0.48\textwidth}{
\begin{tabular}{cl|ccc|ccc}
\toprule
&\multicolumn{1}{c}{} &\multicolumn{3}{c}{\textbf{Bitcoin-Alpha}} & \multicolumn{3}{c}{\textbf{Bitcoin-OTC}}\\
\midrule

\textbf{GNN} & \textbf{Explainer} &    \textbf{top 3} & \textbf{top 4} &\textbf{ top 5}  &  \textbf{top 3} & \textbf{top 4} & \textbf{top 5}\\ \midrule

\multirow{5}{*}{\shortstack{GCN \\ (1-hop)}} & GNNEx  &  $86.3$ & $85.2$ & $81.2$  & $83.3 $  &$ 81.7$ & $ 77.0$\\
& PGEx   & $83.5$ &  $83.6$ & $79.5$ &  $ 79.9$ & $ 80.1$ & $ 76.6$ \\
 & PGMEx  & $87.3$ & $85.7$ & $84.8$ & $83.3$ & $81.7$ &  $80.8$\\

\cmidrule{2-8}
& DnX  & \textcolor{royal}{$\bm{92.2}$}  & \textcolor{royal}{$\bm{89.5}$} & \textcolor{royal}{$\bm{88.4}$} & \textcolor{royal}{$\bm{89.4}$} & \textcolor{royal}{$\bm{86.6}$}  & \textcolor{royal}{$\bm{84.7}$}\\
& FastDnX  & \textcolor{light}{$\bm{89.4}$} & \textcolor{light}{$\bm{87.8}$} & \textcolor{light}{$\bm{86.8}$} & \textcolor{light}{$\bm{87.7}$} & \textcolor{light}{$\bm{85.1}$} & \textcolor{light}{$\bm{83.4}$} \\

\midrule \midrule 

\multirow{5}{*}{\shortstack{GCN \\ (3-hop)}} & GNNEx  &  $80.1$ & $74.9$ & $70.9$ & $82.4$ & $79.6$ & $70.6$ \\
%& PGEx   & $0.005$ & $0.005$ & $0.005$ & $0.004$ & $0.004$ & $0.004$\\
& PGEx   & $81.5$ & $78.1 $ & $ 69.5$ & $ 78.5$ & $74.5 $ & $ 67.4$\\
& PGMEx  & $67.0$ & $59.8$ & $51.8$ & $63.0$ & $55.2$ &  $47.4$\\
\cmidrule{2-8}

& DnX  & \textcolor{royal}{$\bm{95.8}$} & \textcolor{royal}{$\bm{91.9}$} &  \textcolor{royal}{$\bm{87.9}$} & \textcolor{royal}{$\bm{94.8}$} & \textcolor{royal}{$\bm{91.4}$} & \textcolor{royal}{$\bm{86.3}$} \\
& FastDnX  & \textcolor{light}{$\bm{89.8}$} & \textcolor{light}{$\bm{85.2}$} & \textcolor{light}{$\bm{80.2}$} & \textcolor{light}{$\bm{88.0}$} & \textcolor{light}{$\bm{83.0}$} & \textcolor{light}{$\bm{78.8}$} \\

\bottomrule
\end{tabular}}

\label{tab:result_exp_bitcoin_node}
\end{table}

\begin{figure*}[t]
    \centering
    \includegraphics[width=\textwidth]{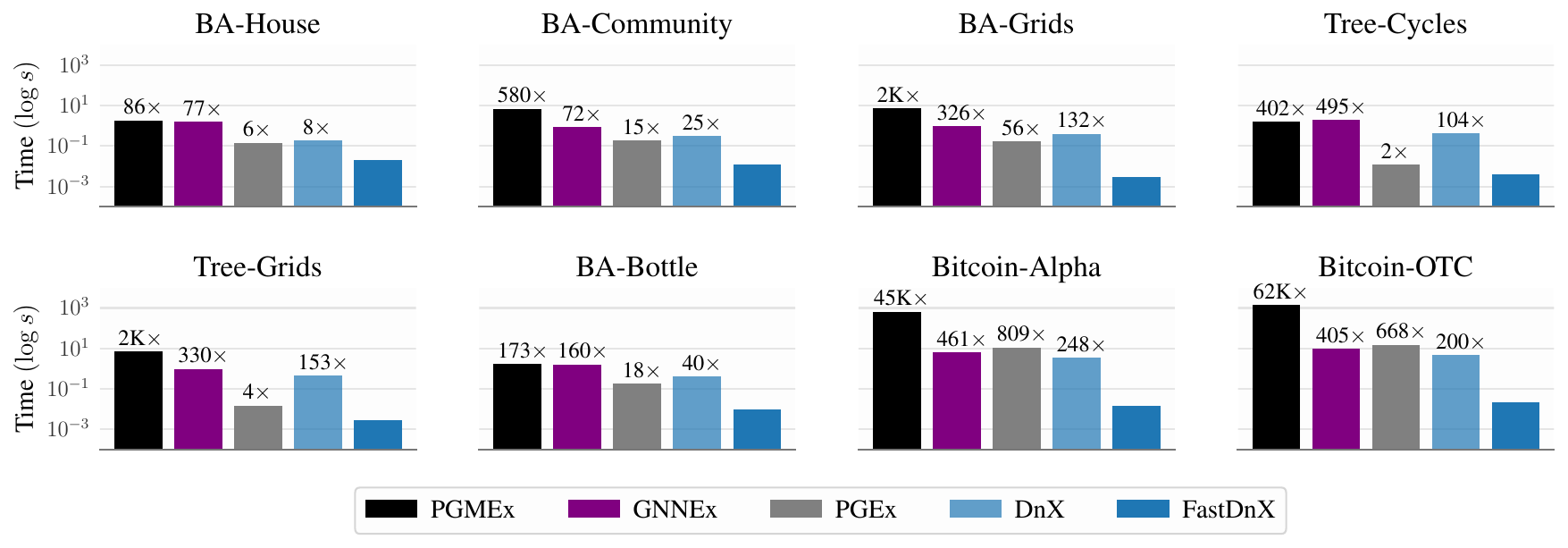}
    \caption{Time comparison. The bar plots show the average time each method takes to explain a prediction from GCN. FastDnX is consistently the fastest method, often by a large margin. For the datasets with largest average degree (Bitcoin datasets), FastDnX is 4 orders of magnitude faster than PGMExplainer and 2 orders faster than the other methods.}
    \label{fig:times}
\end{figure*}

\paragraph{Time comparison.} To demonstrate the computational efficiency of DnX/FastDnX,
\autoref{fig:times} shows the time each method takes to explain a single GCN prediction.
For a fair comparison, we also take into account the distillation step in DnX/FastDnX. 
In particular, we add a fraction -- one over the total number of nodes we wish to explain -- of the distillation time and add it to the time DnX and FastDnX actually take to generate an explanation.
Notably, both DnX and FastDnX are consistently much faster than GNNExplainer and PGMExplainer.
For instance, FastDnX is more than forty thousand times faster than PGMExplainer in Bitcoin-Alpha and Bitcoin-OTC.

\paragraph{Distillation results.} For completeness, \autoref{tab:distillation_gcn} shows the distillation accuracy achieved by our linear network $\Psi$ when $\Phi$ is a GCN,  for both the synthetic and the real datasets. Here, we measure accuracy using the predictions of the model $\Phi$ as ground truth. For all cases, we observe accuracy superior to $86\%$. \autoref{tab:distillation_gcn} also shows the time elapsed during the distillation step.
Similar results are achieved when distilling ARMA, GATED and GIN models, these results are shown and described in the Appendix.
%\textcolor{red}{\bf Appendix ? shows similar results for ARMA, GATED, and GIN.}

\begin{table}[!htb]
\centering
\caption{Distillation accuracy and time for GCN. For all cases, accuracy $>86\%$ and the distillation phase takes considerably less than 1 minute.} 
% \adjustbox{width=\columnwidth}{
\begin{tabular}{lcc}
\toprule
\textbf{Dataset} & \textbf{Accuracy} & \textbf{Time (s)} \\
\midrule
BA-House & $94.2 \pm{1.2}$ & $13.996$ \\
BA-Community  & $86.6\pm{0.1}$ & $16.447$ \\
BA-Grids & $99.9\pm{0.1}$ & $2.721$ \\
Tree-Cycles & $97.7\pm{0.2}$ & $3.820$\\
Tree-Grids & $98.0\pm{0.2}$ & $3.803$\\
BA-Bottle & $98.5\pm{0.2}$ & $3.181$\\
Bitcoin-Alpha & $90.4\pm{0.1}$ & $28.317$\\
Bitcoin-OTC & $89.1\pm{0.2}$ & $32.414$\\
\bottomrule
%
% &\multicolumn{7}{c}{Average Time (s)} \\ \hline
% 13.996 & 16.347 & 2.721 & 3.820 & 3.803 & 3.181 & 28.317 & 32.414 \\
\end{tabular}
% }
\label{tab:distillation_gcn}
\end{table}

Interestingly, although BA-community is the dataset with the lowest distillation accuracy (86.6\%), DnX and FastDnX achieve significantly better results than the previous state-of-the-art (cf. \autoref{tab:result_exp_node}). The rationale for these counter-intuitive results is that the distiller can differentiate between motif nodes and base nodes, and this is enough to get good explanations -- since  the evaluation set comprises motif nodes only. More concretely, the confusion matrix in \autoref{fig:conf_mat} reveals that, despite the low distillation accuracy, the surrogate model $\Psi$ correctly predicts the base nodes (classes 1 and 5). Therefore, $\Psi$ achieves high accuracy for the binary classification problem of distinguishing motif and base nodes, supporting our hypothesis.

\begin{figure}[thb]
\centering
\includegraphics[width=0.8\columnwidth]{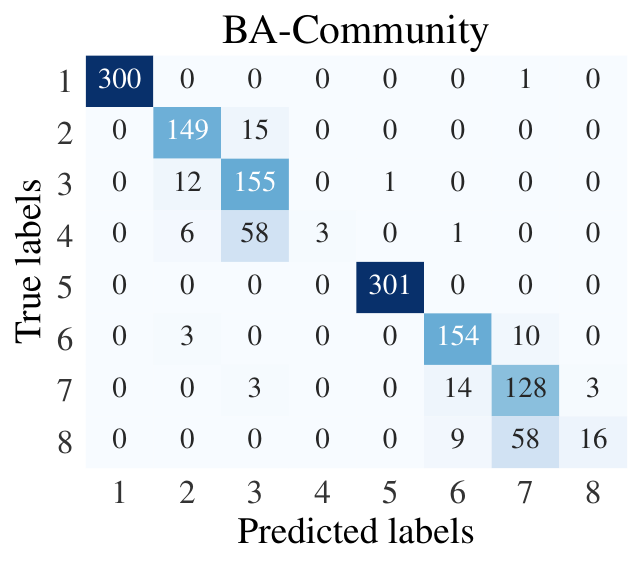}
\caption{Confusion matrix of the distillation process for the BA-Community dataset. Classes 1 and 5 correspond to base nodes. While the surrogate misclassifies many motif nodes, it is able to correctly predict almost all base ones.}
\label{fig:conf_mat}
\end{figure}

\paragraph{Fidelity results.}

\begin{table*}[ht]
\centering
\caption{Performance (fidelity) of different methods for node-level explanations (i.e., explanations given as subsets of nodes) on synthetic datasets. The numbers in \textcolor{royal}{Blue} and \textcolor{light}{Green} denote the best and second-best methods, respectively.  The closer to zero, the better. DnX performs as well as or better than GNNEx, PGEx, and PGMEx in 5 out of 6 datasets.}
%\adjustbox{width=\textwidth}
{
\begin{tabular}{lccccccc}
\toprule
%  & syn 1  & syn 2  & syn 3 & syn 4 & syn 5 &  syn 6 \\ \toprule
\textbf{Explainer}  &\textbf{BA-House} & \textbf{BA-Community}  & \textbf{BA-Grids} & \textbf{Tree-Cycles} & \textbf{Tree-Grids} & \textbf{ BA-Bottle} \\ \toprule
%GI

GNNEx    & $0.035$         & $-0.276$        &  \textcolor{light}{$\bm{0.015}$ }        & $-0.810$         & $ -0.120$      & $ -0.290$ \\
PGEx     & $0.035$         & \textcolor{royal}{$\bm{-0.232}$}        & $-0.194$        & $-0.830 $       & $-0.175$      & $0.142 $  \\ 
PGMEx    & $0.035$         & $-0.290$        & \textcolor{light}{$\bm{0.015}$}         & $-0.677$        & $-0.005 $     &\textcolor{light}{$\bm{ 0.025} $}   \\ \midrule
DnX      & $0.035$         & $-0.286$        & \textcolor{royal}{$\bm{0.008}$}       & \textcolor{royal}{$\bm{-0.230}$}      & \textcolor{light}{$\bm{-0.001}$}     & \textcolor{royal}{$\bm{0.002}$}    \\
FastDnX  & $0.035$         & \textcolor{light}{$\bm{-0.272}$}       &$-0.018 $       & \textcolor{light}{$\bm{-0.240}$}      & \textcolor{royal}{$\bm{0.000}$}       & $0.050$    \\
 % -  & $0.035$         & ${-0.286}$       &$0.015 $       & ${-0.442}$      & ${0.000}$       & $0.005$    \\

\bottomrule
\end{tabular}}
\label{tab:result_exp_node_fidelity}
\end{table*}

\begin{table*}[ht]
\centering
\caption{Performance (fidelity) of different methods for node-level explanations on real-world datasets. The numbers in \textcolor{royal}{Blue} and  \textcolor{light}{Green} denote the best and second-best method, respectively. The closer to zero the better.  We show results for sparsity levels of 30\%, 50\% e 70\%. In all cases, FastDnX or DnX are among the two best-performing methods.}
\adjustbox{width=0.725\textwidth}
{
\begin{tabular}{cl|c|c|c|c|c}
\toprule
%&\multicolumn{1}{c}{} &\multicolumn{1}{c}{Bitcoin-Alpha} & \multicolumn{1}{c}{Bitcoin-OTC}\\
\textbf{Sparsity} & \textbf{Explainer} &  \textbf{Bitcoin-Alpha}  &  \textbf{Bitcoin-OTC} &  \textbf{Cora} &  \textbf{Citeseer} &  \textbf{Pubmed} \\ \hline
\multirow{5}{*}{\shortstack{ $30 \% $}}

& GNNex  & \textcolor{royal}{$\bm{0.008}$} & ${0.060}$ & $0.015$ &\textcolor{royal}{$ \bm{0.006}$}   &  \textcolor{royal}{$\bm{0.000 }$} \\
& PGEx   & $0.101$ & $0.100$ & $0.019$ & $0.051$   & $0.046$ \\
& PGMEx  & $0.154$ & $0.155$ &  \textcolor{light}{$\bm{0.013}$} & $0.012 $  & -      \\
& DnX    &  $0.028$  &  \textcolor{royal}{$\bm{0.020}$}   &  \textcolor{royal}{$\bm{0.007}$} & \textcolor{royal}{$\bm{0.006}$}   & \textcolor{light}{$\bm{0.015}$}\\
& FastDnX  &  \textcolor{light}{$\bm{0.012}$} &  \textcolor{light}{$\bm{0.036}$} & $0.015$ &  \textcolor{royal}{$\bm{0.006}$} & \textcolor{light}{$\bm{0.015}$} \\ 
% & -  & $0.202$ & $0.174$ &$0.033$  & $ 0.048$   & $0.051$ \\
\midrule

\multirow{5}{*}{\shortstack{ $50 \% $}}

& GNNex  &  $0.148$ & $0.040$ & $0.014$ & \textcolor{royal}{$\bm{0.003}$}& \textcolor{light}{$\bm{0.006}$}\\
& PGEx   & $0.102$ & $0.107$ & $0.014$ & $0.027$   & $0.025$ \\
& PGMEx  & $0.102$ & $0.118$& $0.011$ & \textcolor{royal}{$\bm{-0.003}$}  & -  \\
& DnX  & \textcolor{light}{$\bm{0.012}$} &\textcolor{royal}{$\bm{0.018}$}& \textcolor{royal}{$\bm{0.000}$}  & $0.009$ &$ 0.010$\\
& FastDnX  & \textcolor{royal}{$\bm{0.004}$} & \textcolor{light}{$\bm{ 0.056 }$}& \textcolor{royal}{$\bm{ 0.000} $}& \textcolor{royal}{$\bm{0.003}$} &\textcolor{royal}{$\bm{0.005}$}\\
% & -  & $0.130$ & $0.098$ &$0.056$  & $0.018$   & $0.045$ \\
\midrule
\multirow{5}{*}{\shortstack{ $70 \% $}}

& GNNex  & \textcolor{light}{$\bm{-0.004}$} & $0.016 $ & $0.015 $ & $-0.009$   & \textcolor{light}{$\bm{-0.005}$}\\
& PGEx   & $0.091 $&$ 0.099$ & \textcolor{light}{$\bm{-0.003}$} & ${-0.006}$ &  \textcolor{light}{$\bm{0.005}$} \\
& PGMEx  & $0.088$ &$ 0.099$  & $0.009$  & $0.008$   & -      \\
& DnX  & \textcolor{royal}{$\bm{0.000}$}& \textcolor{royal}{$\bm{0.000}$} & $-0.004$ & \textcolor{royal}{$\bm{0.003}$}   & \textcolor{royal}{$\bm{0.000}$}\\
& FastDnX  & \textcolor{light}{$\bm{0.004}$} & \textcolor{light}{$\bm{-0.012}$} & \textcolor{royal}{$\bm{0.000}$}  & \textcolor{royal}{$\bm{-0.003}$}   & $0.010$ \\
% & -  & $0.214$ & $0.074$ &$0.026$  & $0.021$   & $0.015$ \\

\bottomrule
\end{tabular}}
\label{tab:result_real_fidelity}
\end{table*}

To further assess the quality of explanations, we consider a fidelity metric --- we use \emph{Fidelity-} as in \citep{Yuan2022}. This metric measures how the GNN's predictive performance (accuracy) fluctuates when we classify nodes based only on the subgraph induced by the explanations. When the fidelity is positive, there is a decrease in performance. When it is negative, using ``only the explanation'' yields better predictions on average. 
Tables \ref{tab:result_exp_node_fidelity} and \ref{tab:result_real_fidelity} report fidelity for the synthetic and the real datasets, respectively.
Note that we have considered three additional real-world datasets (citation networks): Cora, Citeseer, and Pubmed. 
Results obtained from DnX for the synthetic datasets are the best ones in 50\% of the cases.
It is interesting to observe that for Tree-Cycles and Tree-Grids, DnX/FastDnX are not the best performing ones wrt accuracy (Table 1), but are the best ones wrt fidelity (Table 4). 
For real datasets, in most cases, either DnX or FastDnX achieves the best results overall. 
Importantly, this corroborates the results we observed for the precision metric on Bitcoin-Alpha/OTC datasets. 
We note that it was infeasible to run PGMExplainer on Pubmed as explaining one prediction with it can take up to an hour in our local hardware.

\section{Discussion}\label{sec:discussion}
\vspace{-12pt}
\paragraph{Are benchmarks too simple?} Given that DnX/FastDnX often achieve remarkable performance by explaining simple surrogates, a natural questions arises: \emph{are these popular benchmarks for GNN explanations too simple?} Since these benchmarks rely on model-agnostic ground-truth explanations, we now investigate inductive biases behind these explanations, and show that they can be easily captured. 

\autoref{fig:degree} reports the degree distribution of motif and base nodes for all synthetic datasets. Recall that, by design, ground-truth explanations are always given by motif nodes. Note also that support for the distributions motif and base nodes have almost no overlap for most datasets (except Tree-Cycles \& Tree-Grids). Thus, any explainer capable of leveraging degree information would obtain high accuracy.  

To make this more concrete, we propose a very simple baseline "explainer" that outputs explanations based on the normalized adjacency matrix. 
In particular, we define the importance of node $j$ to the prediction of node $i$ as the $(i,j)$-entry of $\widetilde{A}^{L}$, with $L=3$. With this simple baseline, we obtain the following accuracy values: 99.9\% (BA-House), 98.1\% (BA-Community), 99.9\% (BA-Grids), 95.9\% (Tree-Cycles), 90.4\% (Tree-Grids), and 99.9\% (BA-Bottle). Notably, this baseline would rank 1 if included as 
an explanation method for GCNs in Table 1.

\citet{Faber2021} have also raised issues regarding these benchmarks, proposing alternative datasets as well. We have run FastDnX to explain a 2-layer GCN model for two of their proposed datasets (\emph{Community} and \emph{Negative evidence}), and obtained remarkably good accuracy results: 94.0\% and 99.5\%, respectively. Also, simply ranking nodes based on the entries of $\widetilde{A}^{L}$ ($L=2$) achieves accuracy of 93.0\% (\emph{Community}) and 99.6\% (\emph{Neg. evidence}).

\begin{figure}[thb]
\centering
\includegraphics[width=0.88\columnwidth]{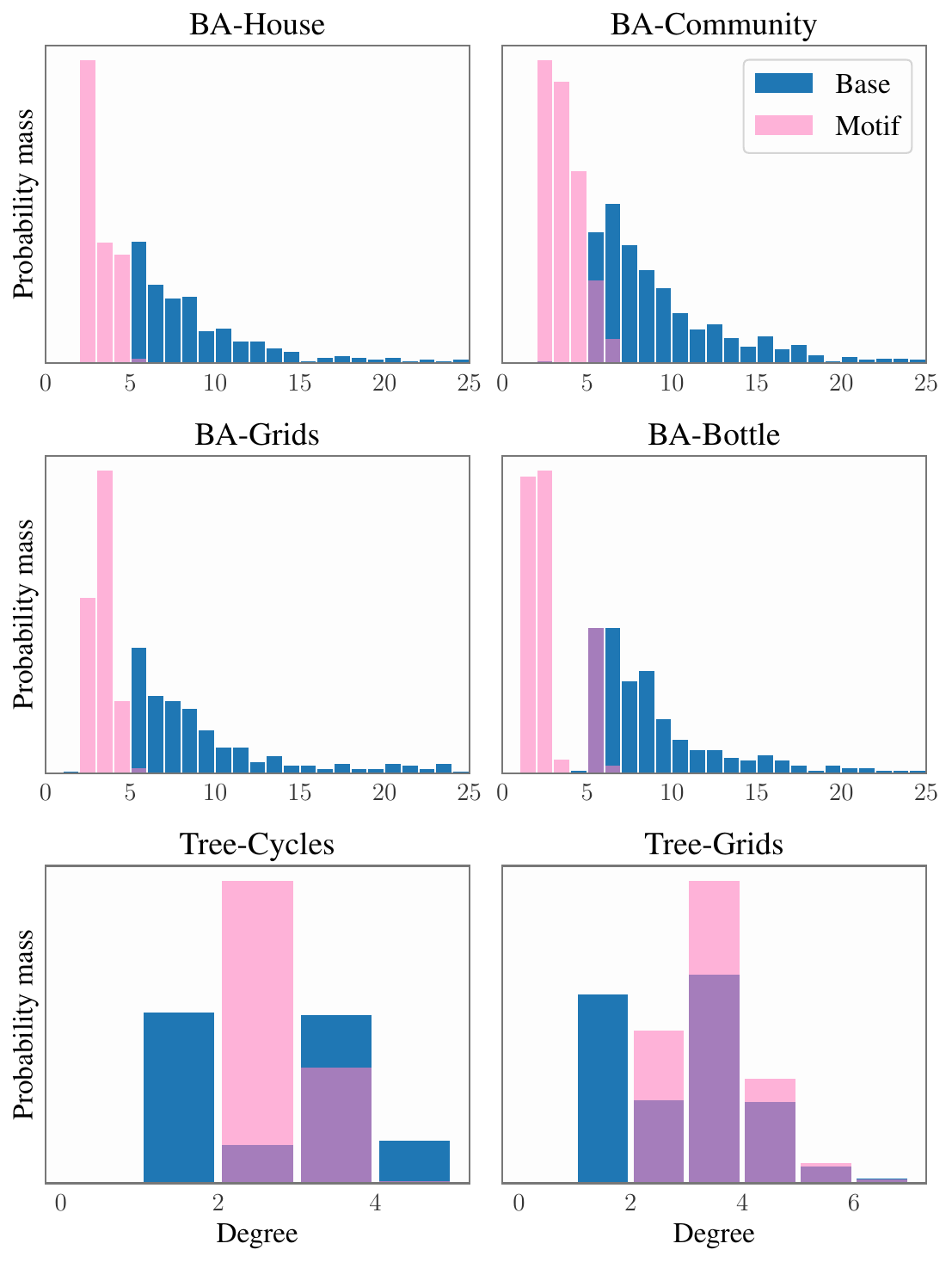}
%  \centering
%  \adjustbox{width=0.5\textwidth}{
%      \subfloat{\includegraphics[width=\textwidth]{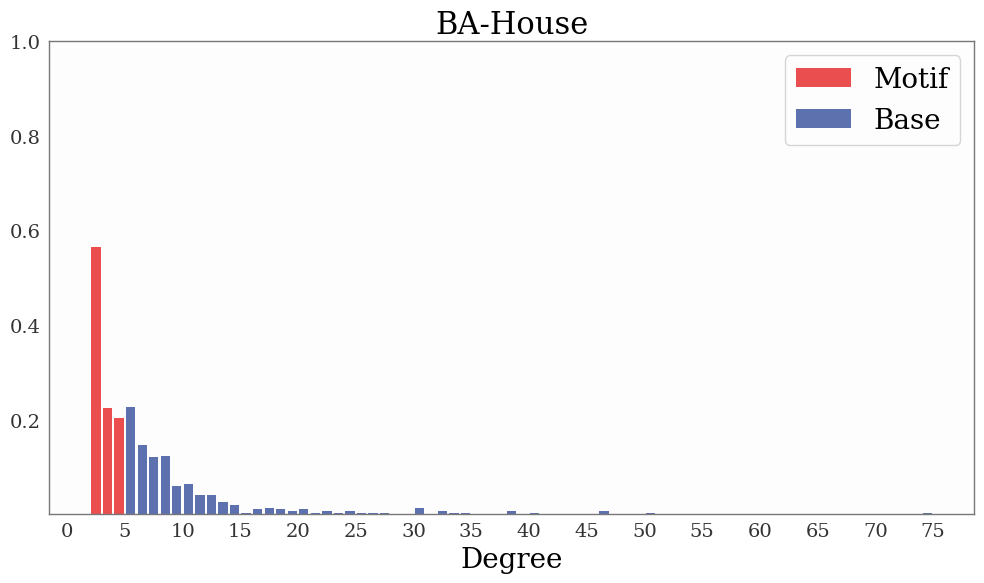}}
%      \subfloat{\includegraphics[width=\textwidth]{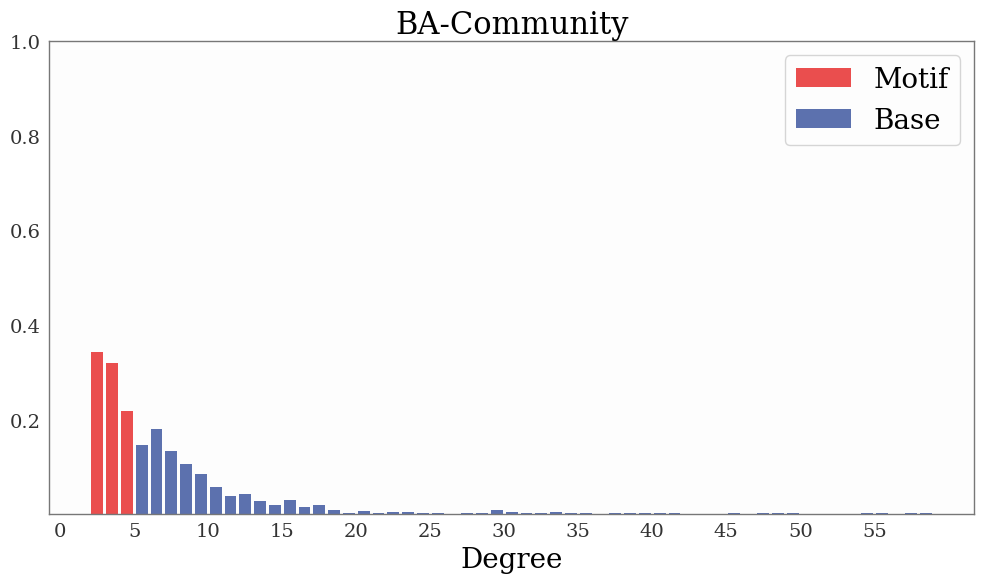}}}\\
% \adjustbox{width=0.5\textwidth}{
%     \subfloat{\includegraphics[width=\textwidth]{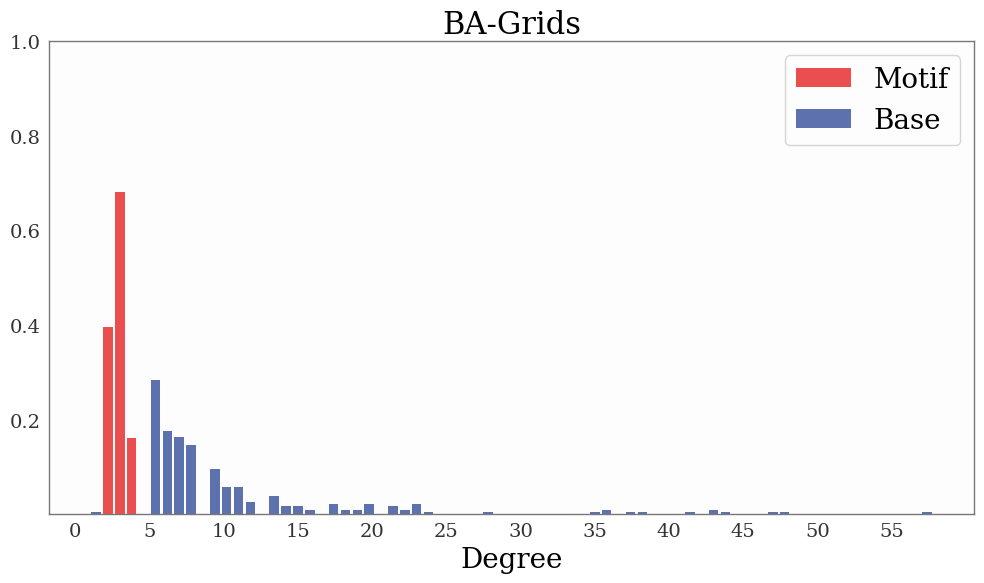}}
%     \subfloat{\includegraphics[width=\textwidth]{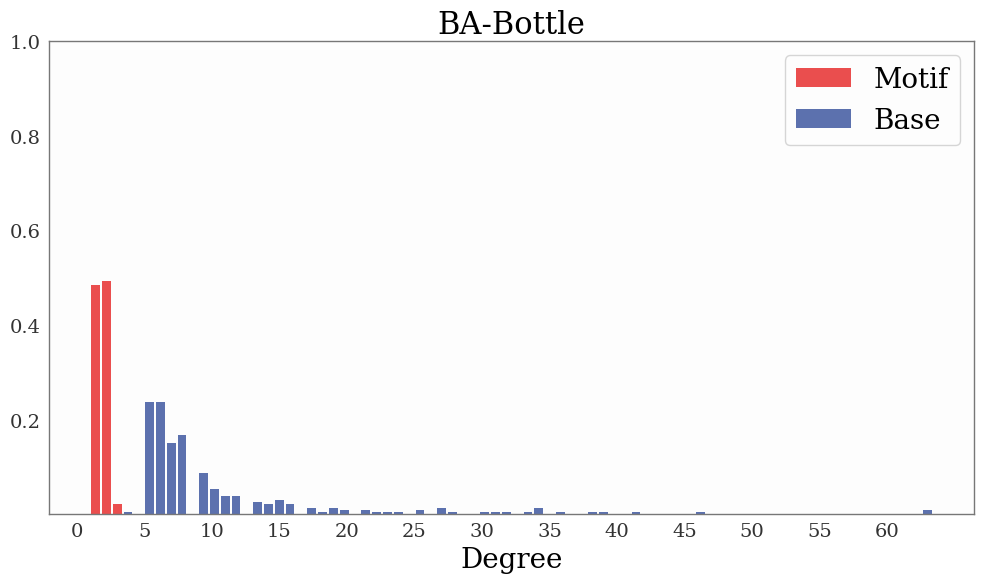}}}\\
% \adjustbox{width=0.5\textwidth}{    \subfloat{\includegraphics[width=\textwidth]{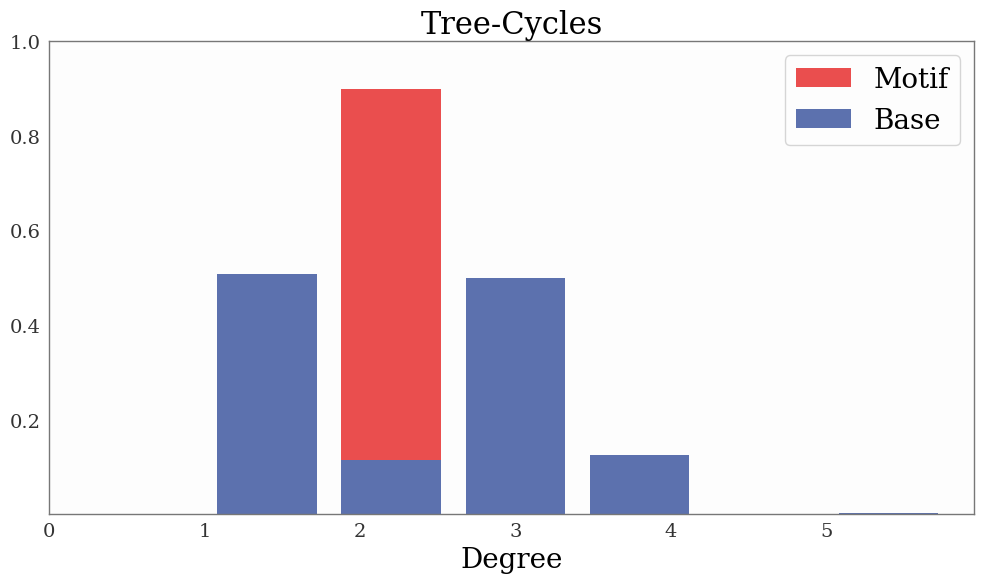}}
%     \subfloat{\includegraphics[width=\textwidth]{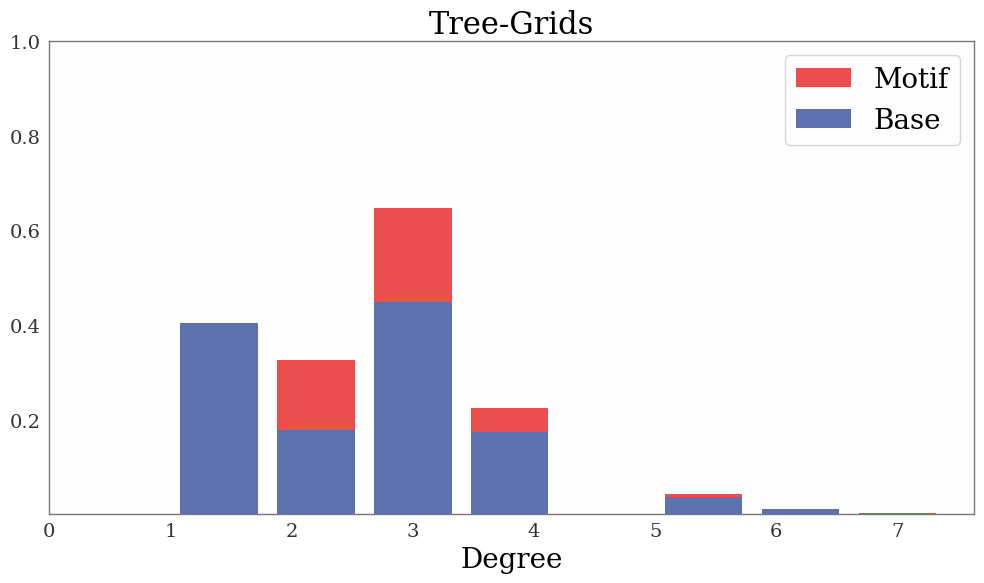}}}
 \caption{Degree distribution of motif and base nodes. While we can overall distinguish motif and base nodes from degree information on BA-based datasets, there is a significant overlap on Tree-Cycles and Tree-Grids.}
 \label{fig:degree}

\end{figure}

\paragraph{Limitations.} While simple graph models (like SGC) have been shown to achieve good performance on node-level classification tasks, they fail to rival recent GNNs for graph-level prediction tasks \citep{CS, SGC}. Naturally, we would not expect DnX and FastDnX to work well out-of-the-shelf to explain graph-level predictions.  However, our methods could be easily extended to use more powerful linear GNNs that incorporate different types of diffusion operators~\citep{sign}, or use long-range residual connections \citep{SimpleDeep}.

\section{Conclusion}
This work proposes \emph{DnX} as a simple and intuitive two-step framework for post-hoc explanation of GNNs. First, we distill the GNN into a simpler and more interpretable one, that serves as a global surrogate. Then, we leverage the simple structure of the surrogate to extract explanations. Experiments show that (Fast)DnX outperforms the prior art on a variety of benchmarks. Remarkably, our simple design allows FastDnX to run at least $200\times$ faster than relevant baselines on real-world tasks. 
Additionally, we provide theoretical results that justify our framework and support our empirical findings.
Besides advancing the current art, we hope this work will motivate other researchers to focus on developing compute-efficient explainability methods.

\section*{Acknowledgments}

This work was supported by the Silicon Valley Community Foundation (SVCF) through the Ripple impact fund, the Funda\c{c}\~ao de Amparo \`a Pesquisa do Estado do Rio de Janeiro (FAPERJ), the Funda\c{c}\~ao Cearense de Apoio ao Desenvolvimento Científico e Tecnológico (FUNCAP), the Coordena\c{c}\~ao de Aperfei\c{c}oamento de Pessoal de N\'ivel Superior (CAPES), and the Getulio Vargas Foundation's school of applied mathematics (FGV EMAp).

 \newpage

\bibliographystyle{plainnat}
\bibliography{references}

\appendix
\clearpage
\newpage

\section{Proofs}
    \label{append:proofs}
    
    \begin{proof}[Proof of Lemma~\ref{theo:bound_unfaithfulness}]
        For node $u$, it is known, by definition, that
        \begin{equation*}
            \begin{split}
                &\left\lVert \Psi(\mathcal{G}_{u})- \Psi(t(\mathcal{G}_{u}, \mathcal{E}_u))\right\rVert_2 \\
                =\ & \left\lVert \sigma\left(\widetilde A^L X \Theta\right)_u - \sigma\left(\widetilde E^L X \Theta\right)_u \right\rVert_2 \\
                =\ &\left\lVert \sigma\left[\left(\widetilde A^L X \Theta\right)_u\right] - \sigma\left[\left(\widetilde E^L X \Theta\right)_u\right] \right\rVert_2,
            \end{split}
        \end{equation*}
        if we call $\sigma$ the $\text{softmax}$ function and $\widetilde E^L$ the powered, normalized adjancency matrix $\widetilde A^L$ after applying the explanation $\mathcal{E}_u$.
        Because $\text{softmax}$ is a Lipschitz continuous function with Lipschitz constant $1$ with respect to norm $\lVert \cdot \rVert_2$ \citep{gao_properties_2018} and considering induced matrix norm compatibility property \citep[Lemma 1.3.8.7]{van_de_geijn_advanced_2022},
        \begin{equation*}
            \begin{split}
                &\left\lVert \Psi(\mathcal{G}_{u})- \Psi(t(\mathcal{G}_{u}, \mathcal{E}_u))\right\rVert_2 \\
                =\ &\left\lVert \sigma\left[\left(\widetilde A^L X \Theta\right)_u\right] - \sigma\left[\left(\widetilde E^L X \Theta\right)_u\right] \right\rVert_2 \\
                \le\ &\left\lVert (X \Theta)^\intercal \right\rVert_2 \left\lVert \widetilde A_u^L - \widetilde E_u^L \right\rVert_2.
            \end{split}
        \end{equation*}
        Similarly, for a perturbation $\mathcal{G}_u'$ of $\mathcal{G}_u$ given by $\Sigma_{u}'$ being added to $X$,
        \begin{equation*}
            \begin{split}
                &\left\lVert \Psi(\mathcal{G}_{u}')- \Psi(t(\mathcal{G}_{u}', \mathcal{E}_u))\right\rVert_2 \\
                \le\ &\lVert [\left(X + \Sigma_{u}'\right) \Theta]^\intercal \rVert_2 \left\lVert \widetilde A_u^L - \widetilde E_u^L \right\rVert_2.
            \end{split}
        \end{equation*}
        Computing the mean over $\mathcal{K} \cup \{\mathcal{G}_u\}$:
        \begin{equation*}
            \begin{split}
                &\frac{1}{|\mathcal{K}| + 1} \sum_{\mathcal{G}_u' \in \mathcal{K} \cup \{\mathcal{G}_u\}} \left\lVert \Psi(\mathcal{G}_{u}')- \Psi(t(\mathcal{G}_{u}', \mathcal{E}_u))\right\rVert_2 \\
                \le\ &  \frac{1}{|\mathcal{K}| + 1} \Biggl[ \lVert (X \Theta)^\intercal \rVert_2 \left\lVert \widetilde A_u^L - \widetilde E_u^L \right\rVert_2 \\
                & \ \ \ \ \ \ \ \ \ \ \ \ \ \ \ \left. + \sum_{\mathcal{G}_u' \in \mathcal{K}}  \lVert [\left(X + \Sigma_{u}'\right) \Theta]^\intercal \rVert_2 \left\lVert \widetilde A_u^L - \widetilde E_u^L \right\rVert_2 \right] \\
                \le\ &  \lVert\Theta^\intercal\rVert_2 \left\lVert \widetilde A_u^L - \widetilde E_u^L \right\rVert_2  \frac{1}{|\mathcal{K}| + 1} \Biggl(\lVert X^\intercal \rVert_2 \\
                & \ \ \ \ \ \ \ \ \ \ \ \ \ \ \ \ \ \ \ \ \ \ \ \ \ \ \ \ \ \ \ \ \ \ \ \ \ \ \ + \left.\sum_{\mathcal{G}_u' \in \mathcal{K}} \lVert (X + \Sigma_{u}')^\intercal \rVert_2 \right) \\
                \le\ &  \gamma_\Theta \left\lVert \underset{\mathcal{E}_u}{\Delta} \widetilde A_u^L \right\rVert_2 \max \left(\{\lVert X^\intercal \rVert_2\} \cup \{\lVert (X + \Sigma_{u}')^\intercal \rVert_2\}_{\mathcal{G}_u' \in \mathcal{K}}\right) \\
                =\ &  \gamma_{\Theta, X, \Sigma} \left\lVert \underset{\mathcal{E}_u}{\Delta} \widetilde A_u^L \right\rVert_2.
            \end{split}
        \end{equation*}
        
        When $\Sigma$ is limited, the constant $\gamma$ may not depend on $\Sigma$.
    \end{proof}
    
    \begin{proof}[Proof of Theorem~\ref{theo:bound_unfaithfulness_2}]
        We know that
        \begin{equation*}
            \begin{split}
                &\left\lVert \Phi(\mathcal{G}_{u}')- \Phi(t(\mathcal{G}_{u}', \mathcal{E}_u))\right\rVert_2 \\
                =\ &\lVert \Phi(\mathcal{G}_{u}') - \Psi(\mathcal{G}_{u}') + \Psi(\mathcal{G}_{u}') - \Psi(t(\mathcal{G}_{u}', \mathcal{E}_u)) \\
                &+ \Psi(t(\mathcal{G}_{u}', \mathcal{E}_u)) - \Phi(t(\mathcal{G}_{u}', \mathcal{E}_u))\rVert_2.
            \end{split}
        \end{equation*}
        So, by using triangle inequality and \autoref{theo:bound_unfaithfulness},
        \begin{equation*}
            \begin{split}
                &\frac{1}{|\mathcal{K}| + 1} \sum_{\mathcal{G}_u' \in \mathcal{K} \cup \{\mathcal{G}_u\}} \left\lVert \Phi(\mathcal{G}_{u}')- \Phi(t(\mathcal{G}_{u}', \mathcal{E}_u))\right\rVert_2 \\
                \le\ & \frac{1}{|\mathcal{K}| + 1} \sum_{\mathcal{G}_u' \in \mathcal{K} \cup \{\mathcal{G}_u\}} \left\lVert \Phi(\mathcal{G}_{u}') - \Psi(\mathcal{G}_{u}')\right\rVert_2 \\
                &+ \frac{1}{|\mathcal{K}| + 1} \sum_{\mathcal{G}_u' \in \mathcal{K} \cup \{\mathcal{G}_u\}} \left\lVert \Psi(\mathcal{G}_{u}') - \Psi(t(\mathcal{G}_{u}', \mathcal{E}_u))\right\rVert_2 \\
                &+ \frac{1}{|\mathcal{K}| + 1} \sum_{\mathcal{G}_u' \in \mathcal{K} \cup \{\mathcal{G}_u\}} \left\lVert \Psi(t(\mathcal{G}_{u}', \mathcal{E}_u)) - \Phi(t(\mathcal{G}_{u}', \mathcal{E}_u))\right\rVert_2 \\
                \le\ & \gamma \left\lVert \underset{\mathcal{E}_u}{\Delta} \widetilde A_u^L \right\rVert_2 + 2\alpha.
            \end{split}
        \end{equation*}
    \end{proof}
    
    \begin{proof}[Proof of Lemma~\ref{theo:prob_bound_unfaithfulness}]
        From the proof of Lemma~\ref{theo:bound_unfaithfulness}, we know that
        \begin{equation*}
            \begin{split}
                &\frac{1}{|\mathcal{K}| + 1} \sum_{\mathcal{G}_u' \in \mathcal{K} \cup \{\mathcal{G}_u\}} \left\lVert \Psi(\mathcal{G}_{u}')- \Psi(t(\mathcal{G}_{u}', \mathcal{E}_u))\right\rVert_2 \le\ \\
                 &  \lVert\Theta^\intercal\rVert_2 \left\lVert \underset{\mathcal{E}_u}{\Delta} \widetilde A_u^L \right\rVert_2  \frac{1}{|\mathcal{K}| + 1} (\lVert X^\intercal \rVert_2 + \sum_{\mathcal{G}_u' \in \mathcal{K}} \lVert (X + \Sigma_{u}')^\intercal \rVert_2 )
            \end{split}
        \end{equation*}
        By using triangle inequality, we can write
        \begin{equation*}
            \begin{split}
                &\frac{1}{|\mathcal{K}| + 1} \sum_{\mathcal{G}_u' \in \mathcal{K} \cup \{\mathcal{G}_u\}} \left\lVert \Psi(\mathcal{G}_{u}')- \Psi(t(\mathcal{G}_{u}', \mathcal{E}_u))\right\rVert_2 \\
                \le\ &  \lVert\Theta^\intercal\rVert_2 \left\lVert \underset{\mathcal{E}_u}{\Delta} \widetilde A_u^L \right\rVert_2 \lVert X^\intercal \rVert_2 \\
                &+ \lVert\Theta^\intercal\rVert_2 \left\lVert \underset{\mathcal{E}_u}{\Delta} \widetilde A_u^L \right\rVert_2 \frac{1}{|\mathcal{K}| + 1} \sum_{\mathcal{G}_u' \in \mathcal{K}} \lVert (\Sigma_{u}')^\intercal \rVert_2.
            \end{split}
        \end{equation*}
        We can work in the perturbation summand:
        \begin{equation*}
            \begin{split}
                \sum_{\mathcal{G}_u' \in \mathcal{K}} \lVert (\Sigma_{u}')^\intercal \rVert_2 \le\ & \sum_{\mathcal{G}_u' \in \mathcal{K}} \lVert \Sigma_{u}' \rVert_\text{F} \\
                =\ & \sum_{\mathcal{G}_u' \in \mathcal{K}} \sqrt{\sum_{i=1}^{nd} \epsilon_{u',i}^2} \\
                =\ & \sigma \sum_{\mathcal{G}_u' \in \mathcal{K}} \sqrt{\sum_{i=1}^{nd} Z_{u',i}^2} \\
                \le\ & \sigma \sum_{\mathcal{G}_u' \in \mathcal{K}} \max\left(1, \ {\sum_{i=1}^{nd} Z_{u',i}^2}\right) \\
                \le\ & \sigma \sum_{\mathcal{G}_u' \in \mathcal{K}} \left(1 + {\sum_{i=1}^{nd} Z_{u',i}^2}\right) \\
                =\ & \sigma |\mathcal{K}| + \sigma Q,
            \end{split}
        \end{equation*}
        with
        \begin{itemize}
            \item $\epsilon_{j, i}~\sim~\mathcal{N}(0, \sigma^2)$;
            \item $Z_{j, i}~\sim~\mathcal{N}(0, 1)$;
            \item $Q~\sim~\chi_{|\mathcal{K}|nd}^2$;
            \item $(n, d)~=~\text{dim}(X)$.
        \end{itemize}
        
        If $\alpha_p$ is the $p$-percentile of $Q$, then the probability of the last inequality is $p$:
        \begin{equation*}
            \begin{split}
                &\frac{1}{|\mathcal{K}| + 1} \sum_{\mathcal{G}_u' \in \mathcal{K} \cup \{\mathcal{G}_u\}} \left\lVert \Psi(\mathcal{G}_{u}')- \Psi(t(\mathcal{G}_{u}', \mathcal{E}_u))\right\rVert_2 \\
                \le\ &  \lVert\Theta^\intercal\rVert_2 \left\lVert \underset{\mathcal{E}_u}{\Delta} \widetilde A_u^L \right\rVert_2 \lVert X^\intercal \rVert_2 \\
                &+ \lVert\Theta^\intercal\rVert_2 \left\lVert \underset{\mathcal{E}_u}{\Delta} \widetilde A_u^L \right\rVert_2 \frac{1}{|\mathcal{K}| + 1} \sum_{\mathcal{G}_u' \in \mathcal{K}} \lVert (\Sigma_{u}')^\intercal \rVert_2 \\
                \le\ & \gamma_1 \left\lVert \underset{\mathcal{E}_u}{\Delta} \widetilde A_u^L \right\rVert_2 + \lVert\Theta^\intercal\rVert_2 \left\lVert \underset{\mathcal{E}_u}{\Delta} \widetilde A_u^L \right\rVert_2 \frac{1}{|\mathcal{K}| + 1} \sigma (Q + |\mathcal{K}|) \\
                \le\ & \gamma_1 \left\lVert \underset{\mathcal{E}_u}{\Delta} \widetilde A_u^L \right\rVert_2 + \lVert\Theta^\intercal\rVert_2 \left\lVert \underset{\mathcal{E}_u}{\Delta} \widetilde A_u^L \right\rVert_2 \frac{1}{|\mathcal{K}| + 1} \sigma (\alpha_p + |\mathcal{K}|) \\
                =\ &   \gamma_1 \left\lVert \underset{\mathcal{E}_u}{\Delta} \widetilde A_u^L \right\rVert_2 + \gamma_2 \left\lVert \underset{\mathcal{E}_u}{\Delta} \widetilde A_u^L \right\rVert_2 \sigma (\alpha_p + |\mathcal{K}|)
            \end{split}
        \end{equation*}
        Notice that, when $\sigma$ approaches zero, the bound's probabilistic characteristic  becomes negligible.
        
        Finally, if we ask the bound to be $\xi$, then the probability is
        \[p = F_{\chi_{|\mathcal{K}|nd}^2}\left(\frac{\xi - \gamma_1 \left\lVert \underset{\mathcal{E}_u}{\Delta} \widetilde A_u^L \right\rVert_2}{\gamma_2 \left\lVert \underset{\mathcal{E}_u}{\Delta} \widetilde A_u^L \right\rVert_2 \sigma} - |\mathcal{K}| \right),\]
        $F_{\chi_{|\mathcal{K}|nd}^2}$ being the c.d.f. of $Q$.
    \end{proof}
    
    \begin{proof}[Proof of \autoref{theo:convexity}]
        The objective function of the problem in \autoref{eq:e2} can be written as
        \begin{equation*}
            \begin{split}
                f(\mathcal{E}) =\ & \left\lVert \widetilde A_i^L \text{diag}(\mathcal{E}) X \Theta - \widetilde A_i^L X \Theta\right\rVert_2^2 \\
                =\ &\left( \widetilde A_i^L \text{diag}(\mathcal{E}) X \Theta - \widetilde A_i^L X \Theta\right) \cdot \\
                &\ \cdot \left( \widetilde A_i^L \text{diag}(\mathcal{E}) X \Theta - \widetilde A_i^L X \Theta\right)^\intercal \\
                =\ &\mathcal{E}^\intercal \text{diag}\left[\left(\widetilde A_i^L\right)^\intercal\right] X \Theta \Theta^\intercal X^\intercal \text{diag}\left[\left(\widetilde A_i^L\right)^\intercal\right] \mathcal{E} \\
                &- 2 \mathcal{E}^\intercal \text{diag}\left[\left(\widetilde A_i^L\right)^\intercal\right] X \Theta \Theta^\intercal X^\intercal \left(\widetilde A_i^L\right)^\intercal \\
                &\ + \left\lVert\widetilde A_i^L X \Theta\right\rVert_2^2 \\
                =\ &\frac{1}{2}\mathcal{E}^\intercal Q \mathcal{E} + \mathcal{E}^\intercal c + \delta.
            \end{split}
        \end{equation*}
        Note also that \[Q = P^\intercal P \text{ with } \ P = \sqrt{2} \Theta^\intercal X^\intercal \text{diag}\left[\left(\widetilde A_i^L\right)^\intercal\right],\]
        thus $Q$ is symmetric and positive semidefinite. Since both the objective function and feasible set are convex, the optimization problem is also convex. 
    \end{proof}

\section{Datasets and implementation details}\label{append:implementation}
\label{sec:details_datasets}

\subsection{Datasets}

Bitcoin-Alpha and Bitcoin-OTC are real-world networks comprising 3783 and 5881 nodes (user accounts), respectively. Users rate their trust in each other using a score between $-10$ (total distrust) and $10$ (total trust). Then, user accounts are labeled as trusted or not based on how other users rate them. Accounts (nodes) have features such as average ratings. Ground-truth explanations for each node are provided by experts.
The synthetic datasets are available in \citep{GNNexplainer} and \citep{PGMExplainer}.
\autoref{tab:details_datasets} shows summary statistics for all datasets.

\begin{table}[!htb]
\centering
\caption{Statistics of the datasets used in our experiments.}
\adjustbox{width=0.35\textwidth}{
\begin{tabular}{lccc}
\toprule
\textbf{Dataset} & \textbf{nodes} & \textbf{edges} & \textbf{labels}\\
\midrule
BA-House & 700 & 4110 & 4\\
BA-Community & 1400 & 8920 & 8\\
BA-Grids & 1020 & 5080 & 2\\
Tree-Cycles & 871 & 1950 & 2\\
Tree-Grids & 1231 & 3410 & 2 \\
BA-Bottle & 700 & 3948 & 4\\
Bitcoin-Alpha & 3783 & 28248 & 2\\
Bitcoin-OTC & 5881 & 42984 & 2\\
\bottomrule
\end{tabular}}
\label{tab:details_datasets}
\end{table}

\subsection{Implementation details}
\begin{table*}[!t]
\centering
\caption{Performance (mean and standard deviation of accuracy) in node classification tasks for the models to be explained.}

\adjustbox{width=\textwidth}{
\begin{tabular}{lccccccccc}
\toprule
  %& syn1  &  syn2 & syn3 & syn4 & syn5 & syn6 & bitcoin-alpha & bitcoin-otc \\ \toprule
  & \textbf{BA-House} &\textbf{ BA-Community}  &\textbf{ BA-Grids} & \textbf{Tree-Cycles} & \textbf{Tree-Grids} & \textbf{ BA-Bottle} & \textbf{Bitcoin-Alpha} & \textbf{Bitcoin-OTC} \\ \toprule
   %&\multicolumn{7}{c}{GCN} \\ \hline

 GCN & $97.9 \pm{1.6}$  & $85.6\pm{1.7}$  & $99.9\pm{0.2}$ &  $97.8\pm{1.2}$  & $88.9\pm{1.8}$ &  $99.0\pm{0.1}$ &  $93.3\pm{0.4}$  & $93.2\pm{0.6}$ \\
  ARMA &  $98.1\pm{2.3}$ & $92.8 \pm{2.3}$& $99.5 \pm{0.6}$& $96.9 \pm{1.5}$&$92.4 \pm{2.3}$& $99.6 \pm{0.9}$& $93.6\pm{1.4}$ & $92.9\pm{1.1}$\\

 %GAT & $100.0 \pm{}$ & $77.8\pm{}$ & $100.0\pm{}$ & $97.7\pm{}$ & $93.4\pm{}$ &$98.5\pm{}$ & $89.6\pm{}$ & $89.6\pm{}$\\
 GATED & $98.0\pm{1.3}$ & $92.3\pm{2.7}$ & $99.9 \pm{0.2}$& $97.8\pm{2.2}$ & $94.4 \pm{3.0}$& $99.4\pm{0.6}$ & $94.4\pm{1.3}$& $93.8\pm{0.7}$\\
 GIN & $95.6 \pm{5.0}$  & $87.0 \pm{1.5}$ & $99.4\pm{0.6}$ &  $97.8 \pm{1.1}$ & $91.4 \pm{2.6}$ &  $98.8\pm{1.0}$ &  $93.4 \pm{1.0}$ & $92.6\pm{1.1}$ \\
 
\bottomrule
\end{tabular}}
\label{tab:classification}
\end{table*}

We ran all experiments using a laptop with an Intel Xeon 2.20 GHz CPU,  13 GB RAM DDR5, and RTX 3060ti 16GB GPU.

The architecture of the GNNs to be explained in this work are: 
$i$) a GCN model with 3 layers (20 hidden units) followed by a two-layer MLP with 60 hidden units; $ii$) an ARMA model with 3 layers (20 hidden units each) followed by a two-layer MLP with 60 hidden units; $iii$) a GIN model with 3 layers (20 hidden units each); and $iv$) a GATED model with 3 layers (100 hidden units) followed by a two-layer MLP with 300 hidden units. We train all these GNNs using with a learning rate of 0.001. All models use ReLU activations in their hidden units. We also use the one-hot vector of the degree as node features.

For computational reasons, on Bitcoin datasets, we provide results for 500 test nodes whenever the candidate explanation set contemplates 3-hop neighborhood (matching the GNNs we want to explain). For 1-hop cases, we use the full set of 2000 nodes as in the original setup.

\paragraph{Node and edge-level explanations.} DnX and FastDnX were originally designed to generate node-level explanations like PGM-Explainer but some baselines such as PGexplainer and GNNexplainer provide edge-level explanations. For a fair comparison with these baselines, we use a procedure to convert edge-level explanations to node-level ones (and vice-versa). To convert node scores to edge scores, we sum the scores of endpoints of each edge, i.e., an edge $(u,v)$ gets score $s_{u,v} = s_u + s_v$ where $s_u$ and $s_v$ are node scores. For the reverse, we say $s_u = {|\mathcal{N}_u|}^{-1}\sum_{j \in \mathcal{N}_u} s_{u, j}$ where $\mathcal{N}_u$ is the neighborhood of $u$.

 \section{Additional experiments}
\label{append:sup_experiments} 

\vspace{-1pt}
\paragraph{GNN performance.} \autoref{tab:classification} shows the classification accuracy of each model we explain in our experimental campaign. Means and standard deviations reflect the outcome of 10 repetitions. All classifiers achieve accuracy $>85\%$.
\begin{table}[!htb]
\centering
\caption{Distillation accuracy and elapsed time for ARMA, GATED and GIN. For all cases, accuracy $>88\%$ and learning $\Psi$ takes less than a minute.} 
\adjustbox{width=\columnwidth}{
\begin{tabular}{llcc}
\toprule
\textbf{Model} &\textbf{Dataset} & \textbf{Accuracy} & \textbf{Time (s)} \\ \midrule
%\multicolumn{3}{c}{\textbf{ARMA}} \\\midrule
%\textbf{Dataset} & \textbf{Accuracy} & \textbf{Time (s)} \\ \midrule
\multirow{8}{*}{\shortstack{ARMA}}

&BA-House & $97.7 \pm{0.01}$ & 7.254\\
&BA-Community  & $96.7\pm{0.03}$ & 22.848 \\
&BA-Grids & $100.0\pm{0.00}$ & 2.701 \\
&Tree-Cycles & $98.8\pm{0.03}$ & 6.331 \\
&Tree-Grids & $91.7\pm{0.15}$ & 4.152\\
&BA-Bottle & $100.0\pm{0.00}$ & 6.312\\
&Bitcoin-Alpha & $91.3\pm{0.01}$ & 28.613 \\
&Bitcoin-OTC & $91.2\pm{0.03}$ & 39.832\\

\midrule 
\multirow{8}{*}{\shortstack{GATED}}

%\multicolumn{3}{c}{\textbf{GATED}} \\\midrule
%\textbf{Dataset} & \textbf{Accuracy} & \textbf{Time (s)} \\ \midrule
&BA-House & $98.0 \pm{0.01}$ &  5.954\\
&BA-Community  & $92.1\pm{0.04}$ & 27.847 \\
&BA-Grids & $100.0\pm{0.00}$ & 2.679 \\
& Tree-Cycles & $99.4\pm{0.04}$ & 6.643\\
&Tree-Grids & $90.9\pm{0.09}$ & 4.609 \\
&BA-Bottle & $100.0\pm{0.00}$ & 6.171 \\
&Bitcoin-Alpha & $91.3\pm{0.02}$ & 26.611 \\
&Bitcoin-OTC & $91.1\pm{0.01}$ & 42.626\\ 
\midrule 

\multirow{8}{*}{\shortstack{GIN}}

%&\multicolumn{3}{c}{\textbf{GIN}} \\\midrule
%\textbf{Dataset} & \textbf{Accuracy} & \textbf{Time (s)} \\ \midrule
&BA-House & $93.4 \pm{0.46}$ & 11.461 \\
&BA-Community  & $90.1\pm{0.13}$ & 24.204 \\
&BA-Grids & $100.0\pm{0.00}$ & 3.884 \\
& Tree-Cycles & $95.8\pm{0.02}$ & 3.335\\
&Tree-Grids & $88.1\pm{0.06}$ & 5.198\\
&BA-Bottle & $100.0\pm{0.01}$ & 6.970\\
&Bitcoin-Alpha & $95.5\pm{0.03}$ & 44.359\\
&Bitcoin-OTC & $89.8\pm{2.04}$ & 55.860\\ 

\bottomrule
\end{tabular}
}
\label{tab:distillation_models}
\end{table}

\vspace{-1pt}
\paragraph{Distillation.} \autoref{tab:distillation_models} shows the extent to which the distiller network $\Psi$ agrees with the GNN $\Phi$. We measure agreement as accuracy, using the predictions of $\Phi$ as ground truth. The means and standard deviations reflect the outcome of 10 repetitions. Additionally, \autoref{tab:distillation_models} shows the  time elapsed during the distillation step. For all cases, we observe accuracy values over $88\%$.

\vspace{-1pt}
\paragraph{Results for edge-level explanations.} \autoref{tab:result_exp_edges} and \autoref{tab:result_exp_bitcoin_edges} complement Tables 1 and 2 in the main paper, showing results for edge-level explanations. All experiments were repeated ten times. For all datasets, we measure performance in terms of AUC, following \citet{PGExplainer}. Both tables corroborate our findings from the main paper. In most cases, FastDnX is the best or second-best model for all models and datasets. For GCN and GATED, PGExplainer yields the best results. Overall, both Dnx and FastDnX outperform GNNExplainer and PGM-Explainer. Remarkably FastDnX and DnX's performance is steady, with small fluctuations depending on the model we are explaining. The same is not true for the competing methods, e.g., PGExplainer loses over $15\%$ AUC for the BA-Community (cf., GCN and ARMA). Note also that FastDnX and DnX are the best models on the real-world datasets.

\begin{table*}[thb]
\centering
\caption{Performance (mean and standard deviation of AUC) of explainer models on synthetic datasets for edge-level explanations. \textcolor{royal}{Blue} and \textcolor{light}{Green} numbers denote the best and second-best methods, respectively.}
\adjustbox{width=\textwidth}{
\begin{tabular}{ccccccccc}
\toprule

 \textbf{Model} & \textbf{Explainer} & \textbf{BA-House} & \textbf{BA-Community}  & \textbf{BA-Grids} &\textbf{ Tree-Cycles} & \textbf{Tree-Grids} &  \textbf{BA-Bottle} \\ \toprule
 %GCN
\multirow{5}{*}{\shortstack{GCN}}

&GNNExplainer & $82.4\pm{6.0}$  & $71.1\pm{6.0}$ &  $80.9\pm{1.0}$ & $58.4\pm{1.0}$ & $53.9\pm{2.0}$ &$82.8\pm{1.0}$  \\
&PGExplainer & \textcolor{royal}{$\bm{99.9\pm{.01}}$}  & \textcolor{royal}{$\bm{99.9\pm{.01}}$} &  $94.1\pm{9.0}$ & \textcolor{royal}{$\bm{92.3\pm{5.0}}$} & \textcolor{royal}{$\bm{79.4\pm{2.0}}$} &\textcolor{royal}{$\bm{99.9\pm{.01}}$}  \\
&PGMExplainer & $56.2\pm{0.3}$ & $53.0\pm{0.4}$ & $68.9\pm{0.2}$ & $62.1\pm{0.2}$ & $66.3\pm{0.4}$ & $53.4\pm{0.5}$\\

\cmidrule{2-8}
&DnX & $95.7\pm{.09}$ & $87.5\pm{.09}$ & \textcolor{royal}{$\bm{97.3\pm{.03}}$} & $80.5\pm{.02}$ & $72.6\pm{0.1}$ & $94.5\pm{0.1}$ \\
&FastDnX & \textcolor{light}{$\bm{99.4\pm{\text{NA}}}$} & \textcolor{light}{$\bm{93.0\pm{\text{NA}}}$} & \textcolor{light}{$\bm{96.7\pm{\text{NA}}}$} & \textcolor{light}{$\bm{89.1\pm{\text{NA}}}$} & \textcolor{light}{$\bm{77.4\pm{\text{NA}}}$} & \textcolor{light}{$\bm{99.5\pm{\text{NA}}}$} \\ \midrule \midrule
 
\multirow{5}{*}{\shortstack{ARMA}}
 %ARMA
 & GNNExplainer & $80.8\pm{1.0}$  & $75.1\pm{1.0}$ &  $69.4\pm{0.1}$ & $59.2\pm{2.0}$ & $59.9\pm{1.0}$ & $85.8\pm{.01}$  \\
 & PGExplainer & $93.3\pm{1.0}$  & $82.7\pm{.01}$ &  $93.1\pm{.01}$ & \textcolor{royal}{$\bm{92.4\pm{2.0}}$} & \textcolor{royal}{$\bm{84.0\pm{.03}}$} & \textcolor{light}{$\bm{98.0\pm{.02}}$}  \\
 &PGMExplainer & $82.2\pm{0.2}$  & $49.6\pm{0.7}$  &  $40.4\pm{0.2}$ & $64.7\pm{0.2}$ & $69.0\pm{0.3}$ & $63.1\pm{0.3}$  \\

\cmidrule{2-8}
 & DnX & \textcolor{light}{$\bm{95.7\pm{.06}}$} & \textcolor{light}{$\bm{87.9\pm{.07}}$} & \textcolor{light}{$\bm{97.3\pm{.04}}$} & $80.4\pm{0.3}$ & $72.8\pm{0.1}$ & $94.4\pm{.08}$ \\
 & FastDnX & \textcolor{royal}{$\bm{99.7\pm{\text{NA}}}$} & \textcolor{royal}{$\bm{96.8\pm{\text{NA}}}$} & \textcolor{royal}{$\bm{97.9\pm{\text{NA}}}$} & \textcolor{light}{$\bm{87.1\pm{\text{NA}}}$} & \textcolor{light}{$\bm{78.4\pm{\text{NA}}}$} & \textcolor{royal}{$\bm{99.7\pm{\text{NA}}}$} \\ \midrule \midrule

\multirow{5}{*}{\shortstack{GATED}}

% GATED
&GNNExplainer & $79.1\pm{2.0}$  & $59.2\pm{3.0}$ &  $69.4\pm{2.0}$ & $62.6\pm{2.0}$ & $53.6\pm{1.0}$ &$73.9\pm{2.0}$  \\
&PGExplainer & \textcolor{royal}{$\bm{99.9\pm{2.0}}$}  & \textcolor{royal}{$\bm{ 99.1 \pm{2.0}}$} &  \textcolor{royal}{$\bm{99.8\pm{1.0}}$} & $76.5\pm{9.0}$ & \textcolor{royal}{$\bm{97.0\pm{3.0}}$} &\textcolor{royal}{$\bm{99.9\pm{0.1}}$}  \\
&PGMExplainer & $49.3\pm{0.4}$  & $47.2\pm{0.5}$  &  $46.2\pm{0.3}$ & $45.8\pm{0.6}$ & $53.1\pm{0.8}$ &$49.1\pm{0.7}$  \\

\cmidrule{2-8}
&DnX & $95.8\pm{.07}$ & $84.4\pm{.09}$ & \textcolor{light}{$\bm{97.3\pm{.02}}$} & \textcolor{light}{$\bm{81.1\pm{0.1}}$} & $72.8\pm{0.1}$ & $94.4\pm{.09}$ \\ 
&FastDnX & \textcolor{light}{$\bm{99.5\pm{\text{NA}}}$} & \textcolor{light}{$\bm{93.7\pm{\text{NA}}}$} & $96.6\pm{\text{NA}}$ & \textcolor{royal}{$\bm{89.2\pm{\text{NA}}}$} & \textcolor{light}{$\bm{78.6\pm{\text{NA}}}$} & \textcolor{light}{$\bm{99.6\pm{\text{NA}}}$} \\ \midrule \midrule

\multirow{3}{*}{\shortstack{GIN}}

  %GIN
% & GNNExplainer &  &  &   &  &  &  \\
% & PGExplainer &  &  &   &  &  &   \\
  & PGMExplainer & $52.4\pm{0.3}$ &  $52.0\pm{0.4}$ & $52.2\pm{1.0}$ &  $67.4\pm{0.2}$& $63.9\pm{0.2}$ & $53.3\pm{0.4}$ \\
  \cmidrule{2-8}
 & DnX &  \textcolor{light}{$\bm{95.7\pm{0.1}}$} & \textcolor{light}{$\bm{87.2\pm{0.1}}$} & \textcolor{royal}{$\bm{97.3\pm{0.1}}$} & \textcolor{light}{$\bm{81.8\pm{0.2}}$} & \textcolor{light}{$\bm{72.0\pm{0.1}}$} & \textcolor{light}{$\bm{94.2\pm{0.1}}$}\\
 
 & FastDnX & \textcolor{royal}{$\bm{97.9\pm{\text{NA}}}$} & \textcolor{royal}{$\bm{91.0\pm{\text{NA}}}$}& \textcolor{light}{$\bm{97.1\pm{\text{NA}}}$} &  \textcolor{royal}{$\bm{86.3\pm{\text{NA}}}$} & \textcolor{royal}{$\bm{72.3\pm{\text{NA}}}$} & \textcolor{royal}{$\bm{97.0\pm{\text{NA}}}$}\\

% syn1
% Accuracy:  0.9965
% Precision:  0.9965
% Auc:  0.979241758013846
% syn2
% Accuracy:  0.94725
% Precision:  0.94725
% Auc:  0.9106418121195393
% syn3
% Accuracy:  0.9401234567901234
% Precision:  0.948171206225681
% Auc:  0.9717303349361439
% syn4
% Accuracy:  0.7847222222222222
% Precision:  0.7847222222222222
% Auc:  0.8634922969150769
% syn5
% Accuracy:  0.7748456790123457
% Precision:  0.7773649171698406
% Auc:  0.7230109768658525
% syn6
% Accuracy:  0.9915
% Precision:  0.9915
% Auc:  0.9705748969506456

\bottomrule
\end{tabular}}
%\end{adjustbox}
\label{tab:result_exp_edges}
\end{table*}

\begin{table}[p]
\centering
\caption{Performance (AUC) of edge-level explanations for real-world datasets, measured over 500 test nodes.}
%\adjustbox{width=0.48\textwidth}
{
\begin{tabular}{cl|c|c}
\toprule
%&\multicolumn{1}{c}{} &\multicolumn{1}{c}{Bitcoin-Alpha} & \multicolumn{1}{c}{Bitcoin-OTC}\\
\textbf{Model} & \textbf{Explainer} &  \textbf{Bitcoin-Alpha}  &  \textbf{Bitcoin-OTC}\\ \hline
\multirow{5}{*}{\shortstack{GCN \\ (3-hop)}}

& GNNex  &  $94.0$ & $97.3$ \\
& PGEx   & $59.7$ & $54.2$ \\
& PGMEx  & $75.8$ & $52.7$ \\
& DnX  & \textcolor{light}{$\bm{97.1}$} &\textcolor{light}{$\bm{98.1}$}\\
& FastDnX  & \textcolor{royal}{$\bm{97.3}$} & \textcolor{royal}{$\bm{99.1}$} \\

\midrule \midrule 
\multirow{5}{*}{\shortstack{ARMA \\ (3-hop)}}
& GNNex  &  $79.6$ & $92.3$ \\
& PGEx   & $47.8$ & $45.4$ \\
& PGMEx  & $76.2$ & $73.8$ \\
& DnX  & \textcolor{light}{$\bm{96.4}$} &\textcolor{light}{$\bm{98.7}$} \\
& FastDnX  & \textcolor{royal}{$\bm{97.4}$} &\textcolor{royal}{$\bm{99.1}$} \\

\midrule \midrule
\multirow{5}{*}{\shortstack{GATED \\ (3-hop)}}

& GNNex  & $92.5$ & $93.6$ \\
& PGEx   & $34.7$ & $33.7$ \\
& PGMEx  & $86.3$ & $83.1$ \\
& DnX  & \textcolor{light}{$\bm{96.3}$} & \textcolor{light}{$\bm{98.6}$} \\
& FastDnX  & \textcolor{royal}{$\bm{97.2}$} & \textcolor{royal}{$\bm{99.1}$} \\

\midrule \midrule 
\multirow{3}{*}{\shortstack{GIN \\ (3-hop)}} 
% & GNNex  &  - & - \\
% & PGEx   & - & - \\
& PGMEx  & $52.8$ & $76.1$ \\
& DnX  &  \textcolor{light}{$\bm{96.4}$} &\textcolor{light}{$\bm{98.7}$} \\
& FastDnX  & \textcolor{royal}{$\bm{97.5}$} &\textcolor{royal}{$\bm{98.9}$} \\

\bottomrule
\end{tabular}}

\label{tab:result_exp_bitcoin_edges}
\end{table}

\vspace{-1pt}
\paragraph{More results for node-level explanations.}
\autoref{tab:result_exp_bitcoin_node_others} shows results for ARMA, GATED, and GIN. As observed in \autoref{tab:result_exp_bitcoin_node} for GCNs, DnX FastDnX are the best methods.

\begin{table}[htb]
\centering
\caption{Performance (average precision) of node-level explanations for real-world datasets, measured over 500 test nodes.}
\adjustbox{width=0.48\textwidth}{
\begin{tabular}{cl|ccc|ccc}
\toprule
&\multicolumn{1}{c}{} &\multicolumn{3}{c}{\textbf{Bitcoin-Alpha}} & \multicolumn{3}{c}{\textbf{Bitcoin-OTC}}\\
\midrule

\textbf{Model} & \textbf{Explainer} &    \textbf{top 3} & \textbf{top 4} &\textbf{ top 5}  &  \textbf{top 3} & \textbf{top 4} & \textbf{top 5}\\ \midrule

\multirow{5}{*}{\shortstack{ARMA \\ (3-hop)}} 
& GNNEx  &  $80.9$ & $78.9$ & $74.5$  & $73.2$  & $69.5$ & $64.0$\\
& PGEx   & $72.5$ &  $67.8$ & $65.0$ &  $69.7$ & $68.7$ & $61.45$ \\
& PGMEx  & $73.8$ & $66.4$ & $58.3$ & $69.1$ & $62.8$ &  $54.9$\\
\cmidrule{2-8}
& DnX  & \textcolor{royal}{$\bm{95.0}$} & \textcolor{royal}{$\bm{90.1}$} & \textcolor{royal}{$\bm{84.9}$} & \textcolor{royal}{$\bm{93.6}$} & \textcolor{royal}{$\bm{88.1}$} & \textcolor{royal}{$\bm{83.2}$} \\
& FastDnX  & \textcolor{light}{$\bm{90.2}$} & \textcolor{light}{$\bm85.8$} & \textcolor{light}{$\bm81.0$} & \textcolor{light}{$\bm{87.7}$} & \textcolor{light}{$\bm{83.3}$} & \textcolor{light}{$\bm{79.2}$} \\

\midrule \midrule 

\multirow{5}{*}{\shortstack{GATED \\ (3-hop)}} 
& GNNEx  &  $80.1$ & $75.1$ & $769.6$  & $75.9$  & $70.9$ & $66.0$\\
& PGEx   & $75.4$ &  $74.8$ & $67.8$ &  $72.5$ & $70.5$ & $65.1$ \\
& PGMEx  & $80.2$ & $76.5$ & $72.4$ & $77.5$ & $72.1$ &  $67.3$\\
\cmidrule{2-8}

& DnX  & \textcolor{royal}{$\bm{94.4}$}  & \textcolor{royal}{$\bm{89.6}$} & \textcolor{royal}{$\bm{84.4}$} & \textcolor{royal}{$\bm{93.4}$} & \textcolor{royal}{$\bm{88.9}$} & \textcolor{royal}{$\bm{83.5}$}\\
& FastDnX  & \textcolor{light}{$\bm{89.6}$} & \textcolor{light}{$\bm{85.3}$} & \textcolor{light}{$\bm{80.0}$} & \textcolor{light}{$\bm{88.0}$} & \textcolor{light}{$\bm{83.4}$} & \textcolor{light}{$\bm{79.0}$} \\

\midrule \midrule
\multirow{3}{*}{\shortstack{GIN \\ (3-hop)}} 
& PGM-Ex  & $58.7$ & $49.2$ & $40.8$ & $57.6$ & $47.8$ &  $40.2$\\
\cmidrule{2-8}

& DnX  & \textcolor{royal}{$\bm{94.3}$} & \textcolor{royal}{$\bm{88.9}$} & \textcolor{royal}{$\bm{83.2}$} & \textcolor{royal}{$\bm{94.0}$} & \textcolor{royal}{$\bm{89.2}$} & \textcolor{royal}{$\bm{83.6}$} \\
& FastDnX  & \textcolor{light}{$\bm{85.0}$} & \textcolor{light}{$\bm{80.4}$} & \textcolor{light}{$\bm{74.6}$} & \textcolor{light}{$\bm{82.6}$} & \textcolor{light}{$\bm{77.1}$} & \textcolor{light}{$\bm{71.3}$} \\

\bottomrule
\end{tabular}}

\label{tab:result_exp_bitcoin_node_others}
\end{table}

\end{document}